\documentclass[journal]{IEEEtran}
\usepackage{amsmath,amssymb,amsthm,mathtools}
\usepackage{cite}
\usepackage{graphicx}
\usepackage{booktabs}
\usepackage{xcolor}
\usepackage{colortbl}
\usepackage{hyperref}
\usepackage{enumitem}
\usepackage{amsmath,amsfonts}
\usepackage{algorithmic}
\usepackage{multirow} 
\usepackage{algorithm}
\usepackage{array}
\usepackage[caption=false,font=normalsize,labelfont=sf,textfont=sf]{subfig}
\usepackage{algorithm}   
\usepackage{textcomp}
\usepackage{stfloats}
\usepackage{url}
\usepackage{verbatim}
\usepackage{cite}
\usepackage{anyfontsize}
\usepackage{lineno}
\usepackage{amsthm}
\usepackage{comment}
\newtheorem{definition}{Definition}[section]
\newtheorem{lemma}[definition]{Lemma}
\newtheorem{proposition}[definition]{Proposition}
\newtheorem{corollary}[definition]{Corollary}

\theoremstyle{remark}
\newtheorem*{remark}{Remark}

\newcommand{\Ystar}{Y^{\star}}
\newcommand{\Istar}{I^{\star}}
\newcommand{\Yhat}{\hat{Y}}
\newcommand{\Ihat}{\hat{I}}
\newcommand{\Ttheta}{T_{\theta}}
\newcommand{\mtheta}{m_{\theta}}
\newcommand{\rtheta}{r_{\theta}}
\newcommand{\clip}{\operatorname{clip}_{[0,1]}}
\newcommand{\supp}{\operatorname{supp}}
\newcommand{\calT}{\mathcal{T}}
\newcommand{\calL}{\mathcal{L}}
\newcommand{\calN}{\mathcal{N}}
\newcommand{\calS}{\mathcal{S}}

\begin{document}
\title{AutoLumNet: Monotone Optimal Transport for Single-Shot Exposure Correction}
\author{%
Airin Akter Tania$^{*}$, Md Raihan Khan$^{*}$, and Mohiuddin Ahmad,~\IEEEmembership{Senior Member,~IEEE}\\
Department of Electrical and Electronic Engineering (EEE)\\
Khulna University of Engineering \& Technology (KUET), Bangladesh\\
\{airinaktertania, kraihan918\}@gmail.com, ahmad@eee.kuet.ac.bd\\
$^{*}$Equal contribution%
}


\markboth{IEEE TRANSACTIONS ON PATTERN ANALYSIS AND MACHINE INTELLIGENCE}%
{Shell \MakeLowercase{\textit{et al.}}: A Sample Article Using IEEEtran.cls for IEEE Journals}


\maketitle
\begin{abstract}
Single-shot exposure correction aims to map an arbitrarily degraded
image---whether under-exposed, over-exposed, or a spatial mixture of
both---to a well-exposed output from a single capture.  We present
AutoLumNet, a framework that decomposes this task into a global
monotone tone curve and a bounded local residual, making the global
component the locus of formal guarantees.  The tone curve is
parameterized as the normalized cumulative integral of a strictly
positive density, ensuring strict monotonicity by construction rather
than by penalty.  We prove that this parameterization (i)~preserves
the pairwise luminance ordering of all pixels and all spatial extrema
unconditionally, and (ii)~is dense in the space of valid tone
corrections, containing the one-dimensional optimal-transport map
from the input to any target luminance distribution.  A
differentiable sorted-sample Wasserstein-2 objective drives the
learned curve toward the OT optimum during training.  Spatially
varying effects that the global map provably cannot address---local
shading, chrominance shifts, and clipped-region restoration---are
handled by a bounded residual decoder with dual-branch convex fusion,
for which we provide an explicit sufficient condition for local order
preservation.  Experiments on five benchmarks (MSEC, SICE, LCDP,
LOL-v1, LOL-v2-real) show that AutoLumNet achieves state-of-the-art
PSNR and SSIM across both under- and over-exposure regimes at
11.2\,ms per frame, and generalizes zero-shot to pure low-light
benchmarks without retraining.  To our knowledge, AutoLumNet is the
first exposure-correction method to unite structural monotonicity,
optimal-transport optimality, and bounded local adaptivity within a
single trainable architecture. 
 \href{https://github.com/kraihan/Autolumnet}{https://github.com/kraihan/Autolumnet}

\end{abstract}

\begin{IEEEkeywords}
Exposure correction, monotone tone mapping, optimal transport,
image enhancement, low-light enhancement, luminance ordering.
\end{IEEEkeywords}


\section{Introduction}
\label{sec:intro}

\IEEEPARstart{I}{mages} captured in uncontrolled environments routinely
exhibit exposure errors. A single frame may contain deep shadows that
conceal structure in dark regions and, simultaneously, saturated
highlights where bright content has been clipped beyond recovery. These
degradations arise because the dynamic range of a digital sensor is far
narrower than that of a natural scene: regions outside the sensor's
operating window are compressed toward zero or one, discarding tonal
information. Exposure errors not only diminish perceptual quality but
also degrade downstream vision systems for detection, recognition, and
tracking, which are typically trained on well-exposed imagery. While
auto-exposure control and high-dynamic-range sensing partially mitigate
the problem, they introduce their own artifacts---motion blur, noise
amplification, ghosting from bracket misalignment---and are ill-suited
to dynamic scenes or resource-constrained mobile capture, where only a
single shot is available.

\textbf{Two disjoint research traditions.} The literature has largely addressed exposure degradation from two non-overlapping directions. The first, \emph{low-light image enhancement} (LLIE), assumes the input is globally under-exposed and seeks to brighten it. The dominant approaches include LLNet~\cite{LLNet}, Retinex-based  decomposition RetinexNet~\cite{RetinexNet}, KinD~\cite{KinD}, URetinex-Net~\cite{URetinex}, Retinexformer~\cite{retinexformer}), learnable tone curves (Zero-DCE~\cite{ZeroDCE}, EnlightenGAN~\cite{EnlightenGAN}), and more recently state-space backbones~\cite{RetinexMamba}, diffusion priors~\cite{RetiDiff}, and alternative color spaces~\cite{CIDNet}. Despite their sophistication, these methods share a defining assumption: the degradation is a \emph{monotone darkening}, and enhancement is a \emph{monotone brightening}. Applied to an over-exposed input, a model trained to brighten only worsens the saturation.

\begin{figure}[!t]
  \centering
  \includegraphics[width=\columnwidth]{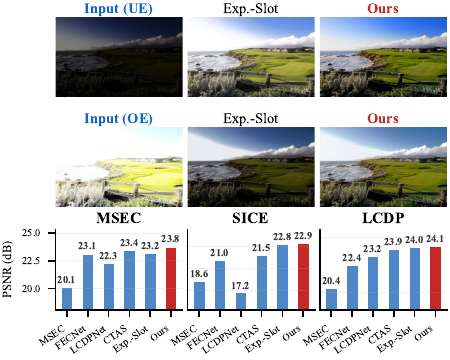}
  \vspace{-16pt}
\caption{\textbf{Single-shot exposure correction with AutoLumNet.}
\emph{Top two rows:} the same SICE scene captured under
under-exposure (UE) and over-exposure (OE).
Exposure-Slot~\cite{expslot}, the current multi-exposure SOTA,
produces acceptable but imperfect results in both directions;
AutoLumNet recovers colour and detail more faithfully.
\emph{Bottom row:} average PSNR on three standard benchmarks.
AutoLumNet achieves the highest PSNR on all three datasets among
the compared multi-exposure methods, while being the only method
whose global tone correction carries formal monotonicity and
optimal-transport guarantees (Sec.~\ref{sec:method}).}
\vspace{-20pt}
\label{fig:teaser}
\end{figure}

The second tradition, \emph{multi-exposure fusion} (MEF), combines
several differently exposed captures of the same
scene~\cite{Mertens2007,Ma2017SPDMEF,DeepFuse,Xu2020U2Fusion,MEFGAN,EMEF}.
MEF can, in principle, recover both shadow and highlight detail, since
the information missing from one exposure is present in another. Its
premise, however, is a fundamental obstacle: multiple spatially aligned
exposures of a static scene are rarely available in practice.

\textbf{The gap.}
Between these traditions lies a setting that neither addresses well: a
\emph{single} image that contains both under- and over-exposed regions.
Recent methods that explicitly target both
directions~\cite{afifi2021msec,enc,MMHT} confirm that this
bidirectional regime is both practically important and substantially
harder than either sub-problem alone. The difficulty is structural:
under- and over-exposure demand \emph{opposite} corrections, and a
single feed-forward mapping that must both brighten and darken places
conflicting demands on its features, tending to produce halos, color
shifts, or detail loss at exposure boundaries. Furthermore, almost all
existing methods are evaluated purely empirically: they offer no
guarantee that the enhanced image preserves the relative ordering of
scene luminances, a property whose violation manifests directly as the
halos and tonal reversals that plague enhancement outputs.

\textbf{Our approach.}
We study \emph{single-shot exposure correction} (SEC): the task of
mapping one degraded image to an output on the manifold of naturally
exposed photographs, regardless of whether its degradations are
under-exposure, over-exposure, or a spatial mixture of both. Rather than
treating correction as a monolithic learned mapping, we decompose it
into a \emph{global} tonal component and a \emph{local} residual
component, and we make the global component the locus of formal
guarantees. Concretely, we model the global luminance correction as a
strictly increasing tone map that is monotone \emph{by construction}---a
normalized cumulative integral of a strictly positive density---so that
order preservation is structural rather than the hoped-for outcome of
optimization.

\textbf{Contributions.}
Our contributions are as follows:
\begin{itemize}
    \item We formalize \emph{single-shot exposure correction} through an
    honest degradation model that explicitly distinguishes
    mathematically recoverable regions, where the tone map is invertible,
    from clipped regions, where recovery is necessarily prior-based
    estimation rather than inversion. This delineation makes precise what
    any single-shot method can and cannot guarantee.

    \item We introduce a \emph{monotone-by-construction} global tone map
    and prove (i)~its strict monotonicity, (ii)~exact preservation of the
    global luminance ordering and of all spatial extrema, and (iii)~that
    the parameterized family is dense in the space of valid tone curves
    and contains the one-dimensional optimal-transport map from the input
    to the target luminance distribution. The third result justifies an
    optimal-transport alignment objective whose minimizer is the correct
    global correction.

    \item We complement the global map with a bounded local residual and
    a convex dual-branch fusion, both of which carry explicit structural
    properties (a hard magnitude bound and a no-extrapolation guarantee),
    and we state precisely the conditional---rather than
    unconditional---sense in which local order is preserved.

    \item We demonstrate competitive performance against representative
    single-image and multi-exposure baselines on standard
    exposure-correction benchmarks, showing that the proposed framework
    attains accuracy comparable to recent methods while providing formal
    guarantees that prior work does not.
\end{itemize}

\noindent
The remainder of this paper is organized as follows.
Section~\ref{sec:related} reviews related work in low-light enhancement,
multi-exposure fusion, and optimal transport for imaging.
Section~\ref{sec:method} develops the method and its theoretical
guarantees. Section~\ref{sec:experiments} reports experiments and
ablations, and Section~\ref{sec:conclusion} concludes.

 
\section{Related Work}
\label{sec:related}

We situate our work among four related lines: low-light image enhancement (\S\ref{ssec:llie}), multi-exposure image fusion (\S\ref{ssec:mef}), single-shot
bidirectional correction (\S\ref{ssec:sec}), and optimal
transport in imaging (\S\ref{ssec:ot}).
 \vspace{-12pt} 
\subsection{Low-Light Image Enhancement}
\label{ssec:llie}

Low-light image enhancement (LLIE) recovers well-exposed imagery
from under-illuminated captures.  The dominant paradigm derives from
Retinex theory, decomposing an image into reflectance and
illumination components and restoring each
separately~\cite{RetinexNet,KinD,URetinex,URetinexNetPP,retinexformer,RetinexFormerPlus,M2Retinexformer}.
State-space backbones~\cite{RetinexMamba,MambaLLIE} and diffusion
priors~\cite{DiffRetinexPP,RetiDiff,QuadPrior,QuadPriorPP} have
recently extended this family.
A parallel line of work eschews explicit decomposition in favor of
learnable tone curves: Zero-DCE~\cite{ZeroDCE} and its efficient
variant~\cite{zerodcepp} estimate per-pixel brightening curves
without paired supervision, characterizing the curve family in terms
of monotonicity and differentiability;
SCI~\cite{sci,SCIPP} achieves real-time enhancement via
self-calibrated illumination.
Other directions include conditional normalizing
flows~\cite{Wang2022llflow}, trainable color
spaces~\cite{CIDNet}, adversarial training without paired
data~\cite{EnlightenGAN}, and spatial--frequency
architectures~\cite{StarIR}.

\textit{Limitation common to all LLIE methods.}\;
By construction, these methods assume a monotone darkening
degradation and apply a monotone brightening correction.  When
presented with over-exposed or mixed-exposure inputs---where
highlights must be \emph{attenuated}---their brightening bias
amplifies the saturation rather than correcting it.  AutoLumNet
operates in the strictly more general single-shot exposure
correction (SEC) regime, handling under-, over-, and mixed-exposure
images with a single model.

\vspace{-9pt}
\subsection{Multi-Exposure Image Fusion}
\label{ssec:mef}

Multi-exposure image fusion (MEF) sidesteps the single-image
limitation by combining several differently exposed captures of
the same scene.  Classical approaches weight pixels or
transform-domain coefficients by well-exposedness
criteria~\cite{Mertens2007, SmartClassroom, Ma2017SPDMEF,PerceptualMEF}, with
extensions to automatic exposure
compensation~\cite{AutoExpComp}.
Deep models learn the fusion mapping
directly~\cite{DeepFuse,Xu2020U2Fusion,MEFGAN}, with
EMEF~\cite{EMEF} adopting an ensemble strategy that fine-tunes a
style code at test time, and
Retinex-MEF~\cite{RetinexMEF} explicitly modelling glare effects
within an unsupervised Retinex framework.

\textit{Fundamental limitation.}\;
MEF methods require at least two spatially aligned exposures of
a static scene.  Handheld and dynamic capture violate the alignment
assumption, and most legacy or single-frame imagery offers no
bracket at all.  AutoLumNet retains MEF-style dual-branch
exposure reasoning---its shadow and highlight sub-networks
(\S\ref{sec:local})---while operating from a single shot.
\vspace{-12pt}
\subsection{Single-Shot Bidirectional Exposure Correction}
\label{ssec:sec}

The setting most directly addressed by AutoLumNet is single-shot
exposure correction (SEC): mapping one degraded image to its
well-exposed counterpart regardless of whether the degradation is
under-exposure, over-exposure, or a spatial mixture of both.

Afifi~\textit{et al.}~\cite{afifi2021msec} established the first SEC
benchmark (MSEC) with a coarse-to-fine correction network; the
companion SICE dataset~\cite{cai2018sice} provides multi-exposure
sequences from which single-shot pairs are derived.
A key challenge in SEC is the conflicting optimization between
under- and over-exposure corrections within a single network.
ENC~\cite{enc} and ECLNet~\cite{eclnet} address this by projecting
features to an exposure-invariant space, while
Huang~\textit{et al.}~\cite{ERL} correlate under- and over-exposed
samples across the batch dimension.
FECNet~\cite{fecnet} decomposes correction in the frequency domain,
LCDPNet~\cite{lcdpnet} exploits local color distributions as a
spatial prior, and IAT~\cite{iat} and CTAS~\cite{ctas} target
real-time inference via lightweight attention and learnable 3D
lookup tables, respectively.
More recent region-aware methods include
RECNet~\cite{recnet} with exposure-contrastive regularization,
CLIER~\cite{CLIER} with CLIP-guided refinement for extreme
exposures, and
Exposure-Slot~\cite{expslot}, the current state of the art, which
uses hierarchical slot attention to progressively cluster and
correct features by exposure level.

\textit{Structural gap.}\;
All of the above methods are evaluated purely empirically.  None
provides a structural guarantee that the enhanced image preserves
the relative ordering of scene luminances---a property whose
violation manifests as the halo artifacts, tonal reversals, and
detail loss that remain common failure modes.
The curve-estimation family (Zero-DCE and its variants) enforces
per-pixel monotonicity through a soft penalty, but as we show in
Remark~\ref{rem:why-global}, per-pixel monotone maps do not imply
spatial order preservation.
ENC and ECLNet project to an ``invariant'' feature space, but this
invariance is a regularization strategy rather than a provable
property.
AutoLumNet fills this gap: its monotone-by-construction global
tone map guarantees strict monotonicity, global pixel-order
preservation, and spatial-extremum preservation
(Lemma~\ref{lem:strict-mono}, Proposition~\ref{prop:order}), while the
parameterized family is shown to contain the optimal-transport map
(Proposition~\ref{prop:ot-in-family}).
\vspace{-8pt}
 \vspace{-8pt}
\subsection{Optimal Transport in Imaging}
\label{ssec:ot}
Optimal transport (OT) has been applied to example-based color
transfer~\cite{Pitie2007,Rabin2014}, and histogram equalization can
itself be viewed as a special case of 1-D OT mapping the input
luminance CDF to a uniform target~\cite{Santambrogio2015}.  These
prior works, however, apply a \emph{fixed} empirical OT map computed
between pre-collected statistics.  They do not learn a parameterized
family of monotone maps, nor prove that such a family is dense in
the space of valid tone corrections.  Our contribution is precisely
this density result (Proposition~\ref{prop:ot-in-family}), connecting
the learned tone curve to the OT optimum via a differentiable
sorted-sample Wasserstein-2 objective.  To our knowledge, no prior
exposure-correction method combines an optimal-transport
interpretation of the global luminance correction with structural
order-preservation guarantees.

\section{Method}
\label{sec:method}

\noindent
We develop AutoLumNet, a framework for single-shot exposure correction that
rests on three design principles formalized in the subsections below:
\begin{enumerate}[label=(\roman*)]
    \item an honest degradation model that distinguishes between
          mathematically recoverable and unrecoverable image regions
          (\S\ref{sec:prelim});
    \item a globally monotone tone map whose order-preservation property
          is provably guaranteed by construction (\S\ref{sec:tone}), and
          whose optimal setting is identified with the one-dimensional
          optimal-transport map (\S\ref{sec:ot}); and
    \item a spatially adaptive local correction via a bounded residual
          decoder with dual-branch convex fusion (\S\ref{sec:local}).
\end{enumerate}
\vspace{-16 pt}
\subsection{Preliminaries and Problem Formulation}
\label{sec:prelim}

\subsubsection{Notation}
\label{sec:notation}

Let $I \in [0,1]^{H \times W \times 3}$ denote a single input image with spatial
domain $\Omega = \{1,\dots,H\} \times \{1,\dots,W\}$ and let $p \in \Omega$
index pixels. We work primarily in the luminance--chrominance decomposition
of $I$: $Y \in [0,1]^{H \times W}$ is the luminance (Rec.~601: $Y = 0.299R +
0.587G + 0.114B$) and $C$ collects the two chrominance channels. The
ground-truth well-exposed image is $\Istar$ with corresponding luminance
$\Ystar$. The learnable map $F_\theta : I \mapsto \Ihat$ is the full
exposure-correction network. We use $\calT$ for the parameterized family of
tone maps and $\mathcal{P}^\star$ for the canonical well-exposed luminance
distribution. Table~\ref{tab:notation} provides a complete symbol reference.

\begin{table}[t]
\centering
\caption{Summary of notation used throughout Section~\ref{sec:method}.}
\label{tab:notation}
\small
\setlength{\tabcolsep}{4pt}
\begin{tabular}{@{}ll@{}}
\toprule
Symbol & Meaning \\
\midrule
$I,\,\Ihat,\,\Istar$       & Input, predicted, ground-truth image \\
$Y,\,\Yhat,\,\Ystar$       & Corresponding luminance channels \\
$C$                        & Chrominance channels of $I$ \\
$\Omega$                   & Pixel domain $\{1,\dots,H\}\times\{1,\dots,W\}$\\
$p,q$                      & Pixel indices \\
$\varphi_p$                & Pixelwise degradation map \\
$S$                        & Unsaturated pixel set (Definition~\ref{def:sat}) \\
$\Ttheta$                  & Global tone-correction map \\
$\mtheta$                  & Positive density parameterizing $\Ttheta$ \\
$w_k,\,c_k$                & Discrete bin masses and knot values \\
$\rtheta$                  & Bounded spatial residual, $|\rtheta|\le\rho$ \\
$\rho$                     & Residual budget (hyperparameter) \\
$\mu,\,\nu$                & Input and target luminance distributions \\
$F_\mu,\,F_\nu$            & Corresponding CDFs \\
$T^\star$                  & 1-D optimal-transport map \\
$\calT$                    & Parameterized monotone map family \\
$K$                        & Number of tone-curve bins \\
$\varepsilon$              & Positivity floor for $\mtheta$ \\
$\lambda_\bullet$          & Loss-weight hyperparameters \\
\bottomrule
\end{tabular}
\end{table}

 \begin{figure*}[t]
      \centering
      \includegraphics[width=\textwidth]{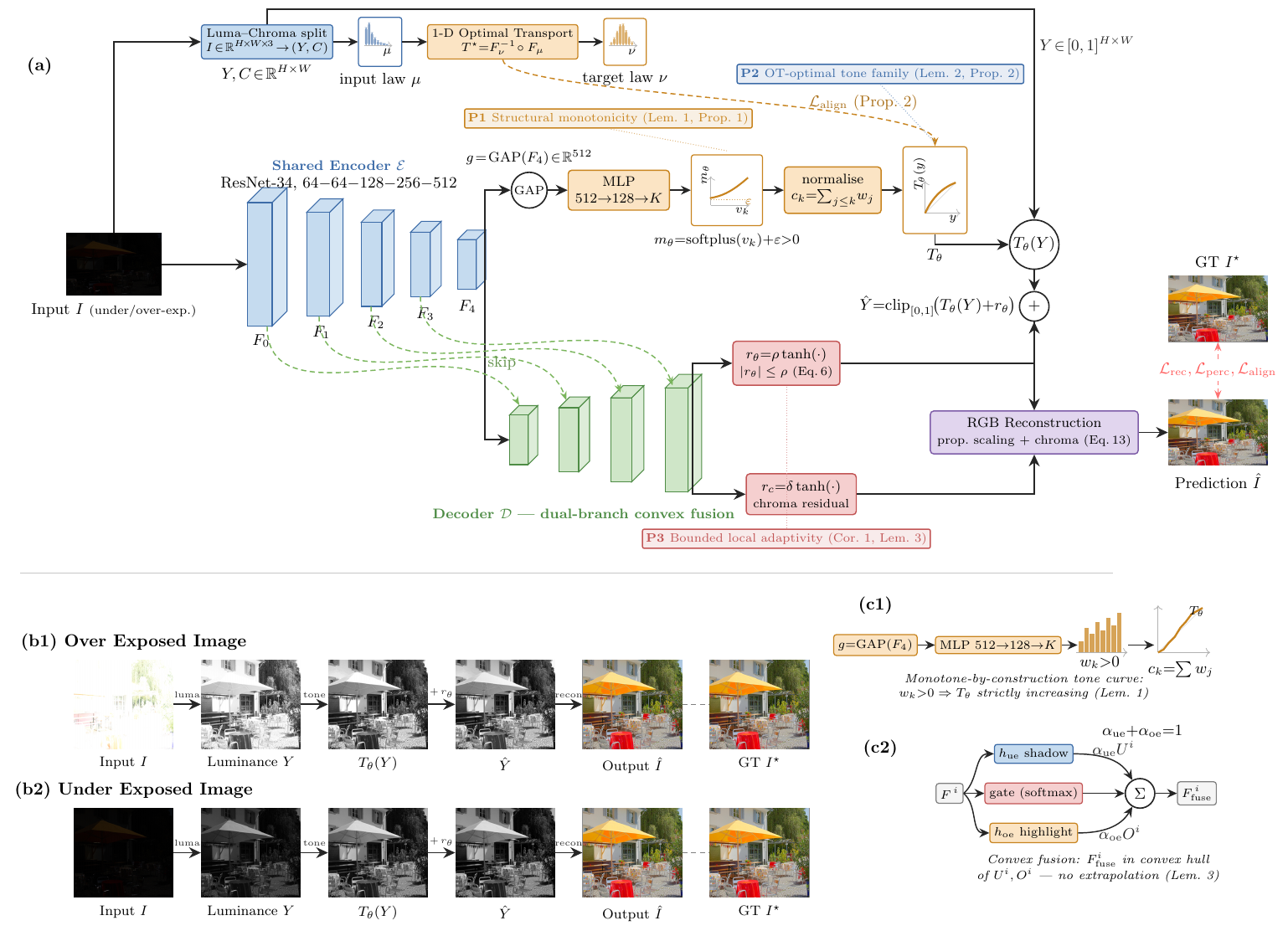}
      \vspace{-16pt}
      \caption{\textbf{Architecture of AutoLumNet.}
  \textbf{(a)} A shared Resnet encoder $\mathcal{E}$ feeds two paths.
  The \emph{global} path (top) realizes principles \textbf{P1} and
  \textbf{P2}: the input is split into luminance--chrominance, the input
  luminance law $\mu$ is aligned to a well-exposed target law $\nu$ by the
  one-dimensional optimal-transport map $T^\star{=}F_\nu^{-1}\!\circ F_\mu$
  ($\mathcal{L}_{\mathrm{align}}$), and a pooled descriptor
  $g{=}\mathrm{GAP}(F_4)$ is mapped by an MLP to bin logits $v_k$, passed
  through $\mathrm{softplus}$ with a positive floor to give strictly
  positive masses $m_\theta{=}\mathrm{softplus}(v_k){+}\varepsilon$,
  normalized and cumulatively summed into the strictly increasing tone
  curve $T_\theta$ (boxed insets), applied to the input luminance $Y$.
  The \emph{local} path (bottom) realizes principle \textbf{P3}: a
  dual-branch convex-fusion decoder $\mathcal{D}$ produces a bounded
  residual $r_\theta$ ($|r_\theta|\!\le\!\rho$) and a chroma residual
  $r_c$. The luma estimate
  $\hat Y{=}\mathrm{clip}_{[0,1]}(T_\theta(Y){+}r_\theta)$ is recomposed
  into RGB. Colored tags mark where each theoretical guarantee is
  enforced.
  \textbf{(b)} Progressive exposure correction across stages.
  \textbf{(c1)} Construction of the strictly increasing tone curve from
  positive bin masses (Lemma~1).
  \textbf{(c2)} Dual-branch convex fusion remains inside the convex hull
  of its two hypotheses (Lemma~3).}
  \vspace{-16pt}
      \label{fig:overview}
    \end{figure*}

\subsubsection{Problem Statement}
\label{sec:problem}

We formulate \emph{single-shot exposure correction} (SEC) as learning a
deterministic mapping $F_\theta : I \mapsto \Ihat$ such that $\Ihat$ lies on
the manifold of naturally exposed photographs, given only the single
degraded input $I$. This framing is deliberately agnostic to the nature of
the degradation: $I$ may be globally under-exposed, globally over-exposed, or
contain both shadow and highlight distortions simultaneously. 

\subsubsection{Degradation Model and Recoverability}
\label{sec:degradation}

We posit that the observed luminance arises from a well-exposed ideal
$\Ystar$ through a pixelwise strictly increasing tone map composed with
hard clipping:
\begin{equation}
\begin{aligned}
    Y(p) &= \clip\!\bigl(\varphi_p(\Ystar(p))\bigr), \\
    &\qquad \varphi_p \;\text{is continuous and strictly increasing.}
\end{aligned}
\label{eq:degradation}
\end{equation}

\begin{definition}[Unsaturated and saturated sets]
\label{def:sat}
The \emph{unsaturated set} is
$S := \{p \in \Omega : 0 < \varphi_p(\Ystar(p)) < 1\}$.
Its complement $\Omega \setminus S$ is the \emph{saturated set}, comprising
pixels whose true luminance was clipped to $0$ or $1$ by the camera sensor.
\end{definition}

\paragraph{Recoverable and unrecoverable regions.}
On $S$, the map $\varphi_p$ is invertible by strict monotonicity, so
$\Ystar(p) = \varphi_p^{-1}(Y(p))$ is in principle recoverable: correction
constitutes a genuine inversion. On $\Omega \setminus S$, the clip is
non-injective and its pre-image is an interval, so \emph{no deterministic
function inverts the saturation}. Recovery on the saturated set is therefore
a prior-based estimation problem, not an inversion problem. We state this
explicitly and design the residual decoder accordingly
(\S\ref{sec:local}); earlier formulations of this
work~\cite{anonymous_arxiv, CustomLowLight} made the unjustified claim that $h_\text{oe}$
inverted a saturating map, which we correct here.

\paragraph{Global--local factorization.}
We factor $\varphi_p = \varphi \circ \psi_p$, where $\varphi$ is a
\emph{global} monotone tonal component shared uniformly across all pixels and
$\psi_p$ encodes \emph{local} shading variation. Correcting $\varphi$
constitutes the global exposure problem and admits the provable guarantees
developed in \S\ref{sec:tone}--\S\ref{sec:ot}. Correcting $\psi_p$ and
restoring content on $\Omega \setminus S$ constitutes the local problem, handled by the bounded residual in \S\ref{sec:local}, for which we make
only conditional, not unconditional, claims.

\subsection{Overview of AutoLumNet}
\label{sec:overview}

\subsubsection{Design Principles}
\label{sec:principles}

The factorization $\varphi_p = \varphi \circ \psi_p$ motivates a
corresponding architectural decomposition. We design AutoLumNet around three
interlocking principles:

\smallskip
\noindent\textbf{P1 (Structural monotonicity).}
The global correction $\Ttheta$ is parameterized so that strict monotonicity
is guaranteed by construction, not enforced by a penalty. This eliminates
halo artifacts and preserves the spatial structure of the scene without
relying on optimization to satisfy what should be a hard constraint.

\smallskip
\noindent\textbf{P2 (Optimality of the tone family).}
The parameterized family $\calT$ is shown to contain the 1-D
optimal-transport map from input luminance distribution $\mu$ to the
well-exposed target distribution $\nu$. This justifies both the choice of
distribution-matching objective and the form of $\calT$.

\smallskip
\noindent\textbf{P3 (Bounded local adaptivity).}
A spatially varying residual $\rtheta$ with $|\rtheta| \le \rho$ provides
local correction that $\Ttheta$ alone cannot supply—including restoration of
clipped content under a learned prior—while a conditional theorem
characterizes precisely when local order is preserved.

\subsubsection{Architectural Overview}
\label{sec:arch_overview}

Figure~\ref{fig:overview} illustrates the full forward pass.
Given input $I$, a shared encoder $\mathcal{E}$ extracts a multi-scale
feature pyramid $\{F^0, F^1, F^2, F^3, F^4\}$ at successively finer-to-coarser
spatial resolutions. Two parallel computations operate on this pyramid:

\begin{enumerate}[label=(\arabic*)]
    \item \textbf{Global tone correction.} A global average-pooled descriptor
          $g = \operatorname{AvgPool}(F^4) \in \mathbb{R}^d$ is passed to the
          tone-curve head, which produces the monotone map $\Ttheta$.
          Applying $\Ttheta$ to the input luminance $Y$ yields the
          globally-corrected luminance $\Ttheta(Y)$.

    \item \textbf{Local residual correction.} A dual-branch decoder
          $\mathcal{D}$ processes $\{F^i\}$ and produces the bounded residual
          $\rtheta \in [-\rho, \rho]^{H \times W}$ and a chroma residual
          $r_c \in [-\delta, \delta]^{H \times W \times 2}$.
\end{enumerate}

These are composed via
\begin{equation}
    \Yhat(p) \;=\; \clip\!\bigl(\Ttheta(Y(p)) + \rtheta(p)\bigr),
    \label{eq:yhat_overview}
\end{equation}
and the final RGB output is obtained by transferring $\Yhat$ back to
the RGB domain while preserving chrominance ratios, followed by a small
chroma residual correction.

\begin{figure}[!t]
  \centering
  \includegraphics[width=\columnwidth]{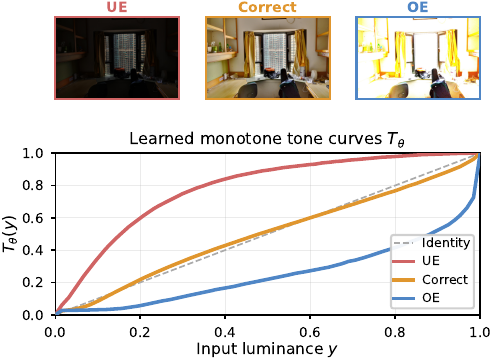}
  \vspace{-16pt}
  \caption{\textbf{Learned monotone tone curves $T_\theta$.}
For three exposures of the same scene (top), the model produces
image-adaptive, strictly increasing curves (bottom).
The under-exposed (UE) curve lies above the identity
(brightening), the correctly exposed curve is near-identity, and
the over-exposed (OE) curve lies below (darkening).
All three curves are monotone by construction
(Lemma~\ref{lem:strict-mono}), start at $(0,0)$ and end at $(1,1)$,
and the piecewise-linear staircase from $K{=}64$ bins is clearly
visible.  This validates the structural guarantee of
Proposition~\ref{prop:order}: a single global curve per image
preserves the spatial luminance ordering regardless of the
exposure direction.}
\vspace{-16pt}
\label{fig:tone_curves}
\end{figure}
\vspace{-8pt}
\subsection{Global Tone Correction via Monotone Optimal Transport}
\label{sec:tone}

\subsubsection{Monotone-by-Construction Parameterization}
\label{sec:tone_param}

We model the global luminance correction as a map
$\Ttheta : [0,1] \to [0,1]$. Rather than appending a post-hoc penalty for
monotonicity violations, we build strict monotonicity directly into the
parameterization. Let $\mtheta : [0,1] \to \mathbb{R}_{>0}$ be produced by
passing a neural network output through the softplus function and adding a
positive floor $\varepsilon > 0$, so that $\mtheta(t) \geq \varepsilon$ for
all $t$. Define the \emph{monotone tone map}:
\begin{equation}
    \Ttheta(y) \;=\;
    \frac{\displaystyle\int_0^{y} \mtheta(t)\,\mathrm{d}t}
         {\displaystyle\int_0^{1} \mtheta(t)\,\mathrm{d}t}.
    \label{eq:tone-cont}
\end{equation}

\paragraph{Discretization.}
In practice, we discretize~\eqref{eq:tone-cont} using $K$ uniform bins.
Specifically, we predict a vector of raw weights $\mathbf{v} \in \mathbb{R}^K$
from the network and form strictly positive masses
$w_k = \operatorname{softplus}(v_k) + \varepsilon > 0$, normalized so that
$\sum_{k=1}^{K} w_k = 1$. The discrete tone curve is the piecewise linear
function whose knots have abscissae $t_k = k/K$ and ordinates
$c_0 = 0$, $c_k = \sum_{j \leq k} w_j$, $c_K = 1$.  Piecewise linear
interpolation between strictly increasing knots is itself strictly increasing,
so the discretization preserves Lemma~\ref{lem:strict-mono} below.

\paragraph{Critical design choice.}
The weights $\{w_k\}$, and hence the tone curve $\Ttheta$, are predicted from
a \emph{global, image-level} descriptor obtained by global average pooling of
the deepest encoder feature: $g = \operatorname{AvgPool}(F^4) \in \mathbb{R}^d$.
Consequently, there is \emph{exactly one tone curve per image}, not one per
pixel. This design choice, which may appear restrictive, is precisely what
enables the spatial order-preservation guarantee of Proposition~\ref{prop:order}
below; per-pixel tone parameters provably do not suffice (Remark~\ref{rem:why-global}).

\subsubsection{Structural Guarantees}
\label{sec:tone_theory}

\begin{lemma}[Strict monotonicity]
\label{lem:strict-mono}
The map $\Ttheta$ defined in~\eqref{eq:tone-cont} is strictly increasing on
$[0,1]$, with $\Ttheta(0)=0$ and $\Ttheta(1)=1$.
\end{lemma}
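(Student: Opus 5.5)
The plan is to work directly from the definition~\eqref{eq:tone-cont}, using only the pointwise floor $\mtheta(t)\ge\varepsilon>0$ and an integrability assumption on $\mtheta$. I will take $\mtheta$ to be measurable and bounded on $[0,1]$. This holds, for instance, when $\mtheta$ is continuous, since it is the softplus of a network output, or when it is piecewise constant, as in the $K$-bin discretization. Write $M(y):=\int_0^y \mtheta(t)\,\mathrm{d}t$ and $Z:=M(1)$.

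The first step is to check that the normalizer is finite and strictly positive. Boundedness gives $Z<\infty$. The floor gives $Z\ge\int_0^1\varepsilon\,\mathrm{d}t=\varepsilon>0$. Hence $\Ttheta=M/Z$ is well defined on all of $[0,1]$. The boundary values then follow immediately: $M(0)=0$ gives $\Ttheta(0)=0$, and $M(1)=Z$ gives $\Ttheta(1)=1$.

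The second step is strict monotonicity. Take $0\le y_1<y_2\le 1$. By additivity of the integral,
\begin{equation}
\Ttheta(y_2)-\Ttheta(y_1)=\frac{1}{Z}\int_{y_1}^{y_2}\mtheta(t)\,\mathrm{d}t\;\ge\;\frac{\varepsilon\,(y_2-y_1)}{Z}\;>\;0 .
\end{equation}
This establishes strict increase. It also gives a quantitative lower bound on the slope, which may be useful later for the order-preservation results. As a side remark, the same representation shows $\Ttheta$ is Lipschitz with constant $\sup\mtheta/Z$, hence continuous. Together with strict increase and the endpoint values, this makes $\Ttheta$ a bijection of $[0,1]$ onto itself, although the lemma does not require that.

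The only step that needs any care is the first one: guaranteeing $0<Z<\infty$, so that the normalization is legitimate. The floor $\varepsilon$ handles positivity, and the regularity of $\mtheta$ handles finiteness. I will also note that the discretized curve satisfies the same conclusion. With $w_k>0$, the knot ordinates satisfy $c_{k}-c_{k-1}=w_k>0$, and $c_0=0$, $c_K=1$. Linear interpolation between strictly increasing knots is strictly increasing on each segment, and these segments agree at the shared knots. The discrete curve is in fact exactly~\eqref{eq:tone-cont} with a piecewise constant density equal to $Kw_k$ on the $k$-th bin, so it is covered by the argument above.
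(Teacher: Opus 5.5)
Your proof is correct and is essentially the paper's argument: both rest on the single estimate $\Ttheta(y_2)-\Ttheta(y_1)=Z^{-1}\int_{y_1}^{y_2}\mtheta(t)\,\mathrm{d}t\ge\varepsilon(y_2-y_1)/Z>0$, which comes from the floor $\mtheta\ge\varepsilon$. Your additional points are small tightenings of the paper's version: you check $0<Z<\infty$, you read the endpoint values directly from $M(0)=0$ and $M(1)=Z$ (the paper instead cites the discrete normalization $\sum_k w_k=1$), and you note that the piecewise-linear curve is the case of a piecewise-constant density $Kw_k$.
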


\begin{proof}
Let $Z := \int_0^1 \mtheta(t)\,\mathrm{d}t > 0$.
For any $0 \leq y_1 < y_2 \leq 1$,
\[
    \Ttheta(y_2) - \Ttheta(y_1)
    \;=\; \frac{1}{Z}\int_{y_1}^{y_2} \mtheta(t)\,\mathrm{d}t
    \;\geq\; \frac{\varepsilon\,(y_2 - y_1)}{Z}
    \;>\; 0,
\]
since $\mtheta(t) \geq \varepsilon > 0$ by construction.
The endpoint identities $\Ttheta(0) = 0$ and $\Ttheta(1) = 1$ follow
immediately from the normalization $\sum_k w_k = 1$.
\end{proof}

\begin{proposition}[Global order and spatial-extremum preservation]
\label{prop:order}
Let $\Yhat(p) := \Ttheta\bigl(Y(p)\bigr)$ for all $p \in \Omega$, with
$\Ttheta$ as in Lemma~\ref{lem:strict-mono}.  Then the following hold.
\begin{enumerate}[label=(\roman*)]
    \item \emph{Global order preservation.} For all $p, q \in \Omega$,
    \[
\begin{aligned}
Y(p) < Y(q) &\;\Longleftrightarrow\; \widehat{Y}(p) < \widehat{Y}(q),\\
Y(p) = Y(q) &\;\Longleftrightarrow\; \widehat{Y}(p) = \widehat{Y}(q).
\end{aligned}
\]
    \item \emph{Spatial-extremum preservation.} For any neighbourhood
    $\calN(p) \subseteq \Omega$, $p$ is a local maximum (resp.\ minimum) of
    $Y$ on $\calN(p) \cup \{p\}$ if and only if $p$ is a local maximum
    (resp.\ minimum) of $\Yhat$ on the same neighbourhood.
\end{enumerate}
Consequently, $\Ttheta$ introduces no new spatial extrema and
destroys none.
\end{proposition}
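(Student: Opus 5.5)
The proof reduces to Lemma~\ref{lem:strict-mono}: $\Ttheta$ is strictly increasing on $[0,1]$, hence injective and order-reflecting. Part~(ii) will then follow from part~(i) by a direct quantifier argument, with no further analytic input.

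For part~(i), I will fix $p,q\in\Omega$ and write $a=Y(p)$, $b=Y(q)$, both in $[0,1]$. The forward implication $a<b\Rightarrow \Ttheta(a)<\Ttheta(b)$ is exactly Lemma~\ref{lem:strict-mono}. For the converse, suppose $\Ttheta(a)<\Ttheta(b)$ but $a\ge b$. If $a=b$, the values agree and we get a contradiction. If $a>b$, Lemma~\ref{lem:strict-mono} gives $\Ttheta(a)>\Ttheta(b)$, again a contradiction. The equality case works the same way. $a=b$ trivially gives equal images, and equal images force $a=b$, since either strict inequality would produce a strict inequality of images. The cleanest way to organize this is trichotomy: $\Ttheta$ maps each of the three exclusive cases $a<b$, $a=b$, $a>b$ to the corresponding exclusive case for the images, so both biconditionals follow at once.

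For part~(ii), I will unfold the definition. The point $p$ is a local maximum of $Y$ on $\calN(p)\cup\{p\}$ iff $Y(q)\le Y(p)$ for all $q$ in that set. By part~(i), each such inequality ($<$ or $=$) holds iff $\Yhat(q)\le\Yhat(p)$. The quantified statements are therefore equivalent, and minima are handled symmetrically. The same equivalence holds verbatim for strict extrema, using only the strict clause of~(i), so the result does not depend on which convention for extremum is used. The concluding sentence follows: the set of extremal pixels (under either convention) for any fixed neighbourhood system is identical for $Y$ and $\Yhat$, so none are created or destroyed.

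There is no real obstacle. The only points needing care are that the proposition is stated for the pure global map $\Yhat=\Ttheta(Y)$, without the residual or clipping of~\eqref{eq:yhat_overview}, and that values of $Y$ lie in $[0,1]$, the domain on which Lemma~\ref{lem:strict-mono} applies. For the piecewise-linear discretization, I will note that strict monotonicity still holds because the knots satisfy $c_k-c_{k-1}=w_k>0$, so the argument carries over unchanged.
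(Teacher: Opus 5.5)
Your proof is correct and follows essentially the same route as the paper: you derive part (i) from the strict monotonicity in Lemma~\ref{lem:strict-mono}, and you get part (ii) by applying (i) to each inequality in the definition of a local extremum. The differences are minor. Your trichotomy argument uses only strict monotonicity, whereas the paper appeals to a strictly increasing bijection and its inverse. Your additional remarks on strict extrema and on the piecewise-linear discretization are also correct.
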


\begin{proof}
Claim~(i) follows from Lemma~\ref{lem:strict-mono}: a strictly increasing
bijection and its inverse preserve and reflect strict inequalities and
equalities. For~(ii), $p$ is a local maximum of $Y$ on
$\calN(p) \cup \{p\}$ iff $Y(p) \geq Y(q)$ for all
$q \in \calN(p)$.  By~(i) each such inequality is preserved under $\Ttheta$,
so the condition holds iff $\Yhat(p) \geq \Yhat(q)$ for all
$q \in \calN(p)$.  The argument for local minima is identical.
\end{proof}

\begin{remark}
\label{rem:why-global}
Proposition~\ref{prop:order} necessitates that $\Ttheta$ not depend on the
pixel index $p$.  To see why, consider the per-pixel affine alternative
$T_p(y) = a(p)\,y + b(p)$ with $a(p) > 0$ for all $p$.  Even though each
$T_p$ is individually monotone, spatial order need not be preserved: set
$Y(p_1) = 0.2$, $a(p_1) = 3$ and $Y(p_2) = 0.3$, $a(p_2) = 0.5$.
Then $Y(p_1) < Y(p_2)$ yet $\Yhat(p_1) = 0.6 > 0.15 = \Yhat(p_2)$, so
the order is reversed by individually monotone per-pixel maps. Per-pixel
monotonicity is necessary but not sufficient for the spatial
order-preservation asserted in Proposition~\ref{prop:order}(ii).
\end{remark}

\subsection{Distribution Alignment via 1-D Optimal Transport}
\label{sec:ot}

\subsubsection{Optimality of the Parameterized Family}
\label{sec:ot_theory}

While Proposition~\ref{prop:order} characterizes the structural properties
of $\Ttheta$, it does not specify \emph{which} element of $\calT$ to select.
We now show that the theoretically optimal selection is precisely the
one-dimensional optimal-transport (OT) map from the input luminance
distribution $\mu$ to a well-exposed target distribution $\nu$, and that
this map is expressible within $\calT$.

Let $F_\mu$ and $F_\nu$ denote the cumulative distribution functions of $\mu$
and $\nu$, respectively.

\begin{lemma}[Monotone rearrangement with possibly atomic marginals]
\label{lem:ot-1d}
Let $\mu,\nu\in\mathcal{P}([0,1])$ with CDFs $F_\mu,F_\nu$ and quantile
$F_\nu^{-1}(u):=\inf\{t:F_\nu(t)\ge u\}$, and $c(x,y)=h(|x-y|)$ with $h$
strictly convex. \emph{(i)} If $\mu$ is atomless, $T^\star:=F_\nu^{-1}\!\circ
F_\mu$ is non-decreasing, satisfies $T^\star_{\#}\mu=\nu$, and is the
$\mu$-a.e.\ unique optimal map; if also $\nu$ is atomless and $\mu$ has a
positive density on interval support, $T^\star$ is strictly increasing on
$\supp(\mu)$. \emph{(ii)} If $\mu$ has an atom and $\nu$ is atomless, no
measurable map pushes $\mu$ to $\nu$; the comonotone coupling
$\gamma^\star=(F_\mu^{-1},F_\nu^{-1})_{\#}\mathrm{Leb}$ is optimal instead.
\end{lemma}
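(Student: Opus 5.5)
The plan has three parts: a pushforward computation via the quantile transform, an optimality argument via cyclical monotonicity, and an atom obstruction for~(ii).

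\emph{Pushforward.} I would first prove $(F_\nu^{-1})_{\#}\mathrm{Leb}_{[0,1]}=\nu$. This follows from the Galois-type equivalence $F_\nu^{-1}(u)\le t \iff u\le F_\nu(t)$, which uses right-continuity of $F_\nu$ and the infimum in the definition of the quantile.

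Next, for atomless $\mu$, the CDF $F_\mu$ is continuous. Hence $\mu(\{x:F_\mu(x)\le u\})=u$ for every $u\in[0,1]$, so $(F_\mu)_{\#}\mu=\mathrm{Leb}_{[0,1]}$. Composing the two gives $T^\star_{\#}\mu=\nu$. Monotonicity of $T^\star$ is immediate, since both $F_\nu^{-1}$ and $F_\mu$ are non-decreasing.

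\emph{Optimality and uniqueness.} I would argue via $c$-cyclical monotonicity. For $h$ strictly convex, take $x_1<x_2$ and $y_1<y_2$. The submodularity inequality
\[
h(|x_1-y_1|)+h(|x_2-y_2|)\le h(|x_1-y_2|)+h(|x_2-y_1|)
\]
holds, and it is strict when the pairs are genuinely crossed. This is a standard consequence of strict convexity applied to the four differences; the two endpoint differences are averaged from the middle ones.

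So any optimal plan, whose support is $c$-cyclically monotone by the standard Kantorovich theory on compact $[0,1]$ with continuous cost, must have monotone support. A monotone coupling of $\mu$ and $\nu$ is unique and equals the comonotone coupling $(F_\mu^{-1},F_\nu^{-1})_{\#}\mathrm{Leb}$. When $\mu$ is atomless, this coupling is concentrated on the graph of $T^\star$, which gives optimality and $\mu$-a.e.\ uniqueness.

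For strict increase, suppose $\nu$ is atomless and $\mu$ has a positive density on an interval $J$. Then $F_\mu$ is strictly increasing on $J$. Moreover, $F_\nu^{-1}$ is strictly increasing on $(0,1)$ whenever $\nu$ has no atoms; I would check this by noting that a flat piece of $F_\nu^{-1}$ corresponds to a jump of $F_\nu$, i.e.\ an atom. Composing the two gives strict increase of $T^\star$ on $\supp(\mu)$.

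\emph{Part~(ii).} Suppose $\mu(\{a\})=m>0$ and $T_{\#}\mu=\nu$. Then $\nu(\{T(a)\})\ge\mu(T^{-1}(T(a)))\ge m>0$, which contradicts $\nu$ being atomless. Optimality of $\gamma^\star$ then follows from the same cyclical-monotonicity argument, applied now at the level of plans: an optimal plan exists by compactness, its support is monotone, and the unique monotone coupling is $\gamma^\star$.

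The main obstacle is the step ``monotone support $\Rightarrow$ equals $\gamma^\star$''. I would handle it by showing that a monotone-support coupling $\gamma$ satisfies $\gamma([0,x]\times[0,y])=\min(F_\mu(x),F_\nu(y))$, which is exactly the distribution function of $\gamma^\star$; plans sharing this function coincide.

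A second, minor point is making sure the submodularity inequality is used with strictness to exclude non-monotone supports. Alternatively, I would cite Santambrogio's Thm.~2.9 and Prop.~2.17 for this.
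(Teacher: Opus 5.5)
Your plan follows the paper's proof on every step the paper actually carries out: the composition $(F_\nu^{-1})_{\#}(F_\mu)_{\#}\mu=\nu$, strictness via $F_\mu$ strictly increasing and $F_\nu^{-1}$ strictly increasing iff $\nu$ is atomless, and the atoms-go-to-atoms obstruction in (ii). Where the paper simply cites Santambrogio \S2.2 for optimality and uniqueness, you unpack it (cyclical monotonicity, plus identifying any monotone coupling through $\gamma([0,x]\times[0,y])=\min(F_\mu(x),F_\nu(y))$), so yours is the same argument made self-contained.

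One caution, which applies equally to the paper's citation: your submodularity inequality needs $t\mapsto h(|t|)$ to be convex on $\mathbb{R}$ (for instance, $h$ also non-decreasing). For the strictly convex $h(t)=(t-1)^2$ it fails at $x_1=y_1=0$, $x_2=y_2=1$, and for $\mu=\nu$ uniform the map $x\mapsto 1-x$ then beats $T^\star=\mathrm{id}$, with cost $1/3$ versus $1$.
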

\begin{proof}
(i) Atomlessness makes $F_\mu$ continuous, so $(F_\mu)_{\#}\mu=\mathrm{Leb}$
and $(F_\nu^{-1})_{\#}\mathrm{Leb}=\nu$; composing gives $T^\star_{\#}\mu=\nu$.
Optimality and a.e.\ uniqueness for strictly convex 1-D costs are
classical~\cite[\S2.2]{Santambrogio2015}; strictness holds since $F_\mu$ is
then strictly increasing and $F_\nu^{-1}$ is so iff $\nu$ is atomless. (ii) A
measurable map sends atoms to atoms, so none attains the atomless $\nu$;
optimality of $\gamma^\star$ is classical~\cite[\S2.2]{Santambrogio2015}.
\end{proof}
\begin{remark}[Scope under clipping]
Under~\eqref{eq:degradation}, clipping places atoms of $\mu$ at $\{0,1\}$, so
case (ii) governs the full law: no tone curve transports $\mu$ to an atomless
target. The map-level optimum (i) is thus claimed only for the unsaturated
restriction $\mu_S:=\mathrm{law}(Y(p):p\in S)$, atomless whenever $Y^\star$ is.
Since $T_\theta$ is injective it preserves atoms, so spreading clipped mass
requires the residual $r_\theta$; training is unaffected, as the sorted-sample
loss~\eqref{eq:align} equals $W_2^2$ for arbitrary marginals.
\end{remark}

\begin{proposition}[The OT map lies in the parameterized family]
\label{prop:ot-in-family}
The family $\calT$ of maps defined by~\eqref{eq:tone-cont}, over all
admissible parameters $\theta$, is dense in the sup-norm in the set of
continuous non-decreasing surjections $[0,1]\to[0,1]$. Consequently, the
1-D OT map $T^\star$ of Lemma~\ref{lem:ot-1d}, whenever continuous (e.g.
$\nu$ has connected support), is approximable by elements of $\calT$ to
arbitrary sup-norm accuracy.
\end{proposition}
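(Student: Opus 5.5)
The plan is to approximate a given target directly, rather than through abstract density arguments. Fix a continuous non-decreasing surjection $G:[0,1]\to[0,1]$ and $\eta>0$. Since $G$ is monotone and surjective onto $[0,1]$, we have $G(0)=0$ and $G(1)=1$. I take the family $\calT$ to consist of maps~\eqref{eq:tone-cont} with $\mtheta$ continuous (or at least integrable) and bounded below by some positive floor. The floor may be shrunk as needed; otherwise one works with the discretized family, where every positive $w_k$ is attainable by softplus plus $\varepsilon$.

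\textbf{Step 1: reduce to strictly increasing targets.} Set $G_\delta(y):=(1-\delta)G(y)+\delta y$ for $\delta\in(0,1)$. This is continuous and strictly increasing, satisfies $G_\delta(0)=0$ and $G_\delta(1)=1$, and obeys $\|G_\delta-G\|_\infty\le\delta$. Choosing $\delta<\eta/3$ makes it enough to approximate $G_\delta$.

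\textbf{Step 2: approximate by a piecewise-linear curve with positive slopes.} $G_\delta$ is uniformly continuous, so pick $K$ with $\omega_{G_\delta}(1/K)<\eta/3$. Let $L$ be the piecewise-linear interpolant of $G_\delta$ at the knots $t_k=k/K$. Its slopes are
\[
K\bigl(G_\delta(t_k)-G_\delta(t_{k-1})\bigr)\ \ge\ K\delta\,(t_k-t_{k-1})=\delta>0,
\]
and $\|L-G_\delta\|_\infty\le\omega_{G_\delta}(1/K)$, because both functions are monotone and agree at the knots.

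\textbf{Step 3: realize $L$ in $\calT$.} $L$ is exactly the discrete tone curve with bin masses $w_k=G_\delta(t_k)-G_\delta(t_{k-1})$. These masses are positive and sum to $1$. Equivalently, $L$ is~\eqref{eq:tone-cont} with the step density $\mtheta=\sum_k Kw_k\mathbf 1_{[t_{k-1},t_k)}$, whose normalizer is $Z=1$.

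\textbf{Main obstacle: continuous densities and the fixed floor.} If $\calT$ requires continuous $\mtheta$, I mollify the step density into a continuous $\tilde m\ge\delta$ with $\|\tilde m-\mtheta\|_{L^1}<\eta'$. Then
\[
\bigl|\textstyle\int_0^y\tilde m-\int_0^y\mtheta\bigr|\le\eta' \quad\text{and}\quad |\tilde Z-1|\le\eta',
\]
which gives a sup-norm error of $O(\eta')$ after normalization.

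The fixed floor $\varepsilon$ has to be reconciled with arbitrary shapes. Because~\eqref{eq:tone-cont} is invariant under scaling $\mtheta\mapsto c\,\mtheta$, I multiply by a large constant $c$ so that $c\,\tilde m\ge\varepsilon$ while the ratio is unchanged. The only remaining question is whether every function above $\varepsilon$ lies in the range of softplus$(\cdot)+\varepsilon$. This holds, since softplus maps $\mathbb R$ onto $(0,\infty)$. I expect this scaling and realizability bookkeeping, rather than the approximation itself, to be the delicate point.

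\textbf{Conclusion.} The triangle inequality gives $\|T_\theta-G\|_\infty<\eta$, which proves density. For the consequence, by Lemma~\ref{lem:ot-1d} $T^\star=F_\nu^{-1}\circ F_\mu$ is non-decreasing. When it is continuous, it maps $[0,1]$ onto the support of $\nu$.

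If that image is not all of $[0,1]$, I first note that $T^\star$ is determined only $\mu$-a.e., and extend it affinely outside $\supp\mu$. Alternatively, I apply the argument above with endpoints $a=T^\star(0)$ and $b=T^\star(1)$, using affine rescaling; the paper's surjection hypothesis is exactly what avoids this case. Density then yields approximants in $\calT$ to arbitrary sup-norm accuracy.
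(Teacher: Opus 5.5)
Your density argument is correct and is the paper's own. The paper's perturbed weights $\hat w_k=(1-\tfrac\eta2)(f(t_k)-f(t_{k-1}))+\tfrac{\eta}{2K}$ are exactly the bin increments of your $G_\delta$ with $\delta=\eta/2$. Its ``exact weights'' step is your scaling-plus-$\operatorname{softplus}^{-1}$ bookkeeping, and the interpolation bound is identical. Your mollification step only adds coverage of the continuous-density reading of~\eqref{eq:tone-cont}, which the paper avoids by working with the discretized family.

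The gap is in the final consequence. Every element of $\calT$ satisfies $T(0)=0$ and $T(1)=1$, so density among surjections says nothing about a $T^\star$ that misses an endpoint, and neither of your fixes closes this.
\begin{itemize}
\item Redefining $T^\star$ off $\supp\mu$ is vacuous when $\supp\mu=[0,1]$.
\item $a+(b-a)T$ is not an element of $\calT$, so the rescaling route proves approximation by a larger family.
\end{itemize}
Concretely, take $\mu$ uniform on $[0,1]$ and $\nu$ uniform on $[0,0.8]$. Then $T^\star(y)=0.8y$ is continuous and $\nu$ has connected support. Yet any $T\in\calT$ is continuous with $T(1)=1$, so $T>0.9$ on some interval $(1-\epsilon,1]$. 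That interval has positive $\mu$-mass, and there $|T-T^\star|>0.1$. Hence no member of $\calT$ comes within $0.1$ of $T^\star$, not even $\mu$-a.e.

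The consequence therefore needs an extra hypothesis: either $T^\star(0)=0$ and $T^\star(1)=1$, or $\supp\mu$ bounded away from the offending endpoint, which is the case where your first fix works. Your concluding sentence should be stated under that hypothesis rather than asserted outright. The paper's proof (``applying this to $T^\star$ gives the claim'') has the same gap without flagging it, so you were right to check surjectivity.
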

\begin{proof}
\emph{Exact weights.} For any $\hat w\in\Delta^{K-1}$ with $\hat w_k>0$,
pick $c>\varepsilon/\min_k\hat w_k$ and $v_k=\operatorname{softplus}^{-1}
(c\hat w_k-\varepsilon)$; then $\operatorname{softplus}(v_k)+\varepsilon
=c\hat w_k$, so after normalization the realized masses equal $\hat w$
exactly. Every strictly positive probability vector on $K$ bins is thus
attainable.
\emph{Strictly increasing $f$.} Set $\hat w_k=f(t_k)-f(t_{k-1})>0$; the
induced $T_\theta$ is the piecewise-linear interpolant of $f$ at knots
$t_k=k/K$. On each cell both $f$ and $T_\theta$ lie in
$[f(t_{k-1}),f(t_k)]$, so $\|f-T_\theta\|_\infty\le\max_k\bigl(f(t_k)
-f(t_{k-1})\bigr)\le\omega_f(1/K)\to0$ as $K\to\infty$.
\emph{Non-decreasing $f$.} Given $\eta>0$, take $K$ with $\omega_f(1/K)
\le\eta/2$ and $\hat w_k=(1-\tfrac\eta2)(f(t_k)-f(t_{k-1}))+\tfrac{\eta}
{2K}>0$; the knot ordinates satisfy $|c_k-f(t_k)|\le\eta/2$, whence
$\|f-T_\theta\|_\infty\le\eta$. Applying this to $T^\star$ (Lemma
\ref{lem:ot-1d}) gives the claim.
\end{proof}

\subsubsection{Learning the Tone Curve}
\label{sec:ot_learning}

Computing $F_\nu^{-1} \circ F_\mu$ analytically during training is awkward,
and in the supervised setting paired data are available.  We therefore
\emph{learn} $\Ttheta$ within $\calT$ by minimizing a differentiable
distributional surrogate.  Specifically, we use the debiased Sinkhorn
divergence:
\begin{equation}
    \calL_{\text{align}} \;=\;
    \calS_{\varepsilon}\!\bigl(\Ttheta{}_{\#}\hat{\mu},\;\hat{\nu}\bigr),
    \label{eq:align}
\end{equation}
where $\hat{\mu}$, $\hat{\nu}$ are empirical one-dimensional luminance
distributions and $\calS_\varepsilon$ is the Sinkhorn divergence with
entropic regularization $\varepsilon$.  As $\varepsilon \to 0$,
$\calS_\varepsilon \to W_2^2$ and $\calS_\varepsilon(\alpha,\alpha) = 0$.
In our implementation we use the equivalent sorted-sample quadratic loss,
\begin{equation}
    \calL_{\text{align}}
    \;=\;
    \frac{1}{N}\sum_{n=1}^{N}
    \bigl(\Yhat_{(n)} - \nu_{(n)}\bigr)^2,
    \label{eq:w2}
\end{equation}
where $\Yhat_{(1)} \leq \cdots \leq \Yhat_{(N)}$ are the sorted predicted
luminance values and $\nu_{(1)} \leq \cdots \leq \nu_{(N)}$ are quantiles
of $\hat{\nu}$ at the corresponding levels.  In one dimension,
\eqref{eq:w2} equals $W_2^2(\hat{\mu}, \hat{\nu})$ exactly, is
differentiable through the network via the gradients of the sort operation,
and is consistent with $T^\star$ by Proposition~\ref{prop:ot-in-family}.

\paragraph{Choice of target distribution.}
A fixed prior such as a truncated Gaussian $\calN(0.5, \sigma^2)|_{[0,1]}$
is a convenient default but constitutes a modeling choice rather than a
theorem.  Empirical luminance statistics of natural images are heavy-tailed
and scene-dependent. In practice, we estimate $\mathcal{P}^\star$
from the empirical luminance distribution of well-exposed images in the
training set and optionally condition it on the scene type through the
encoder; the truncated Gaussian is retained only as a fallback for
scene types underrepresented in training data.

\subsection{Local Correction and Dual-Branch Fusion}
\label{sec:local}

\subsubsection{Bounded Spatial Residual}
\label{sec:residual}

The global map $\Ttheta$ alone cannot address three aspects of the correction
that require spatial adaptivity: local shading variation $\psi_p$,
restoration of content on the clipped set $\Omega \setminus S$, and
chrominance adjustment. Furthermore, Proposition~\ref{prop:order} specifically
\emph{forbids} spatial reordering by design, which limits the ability of
$\Ttheta$ to correct localized exposure anomalies. We address both
limitations by adding a bounded spatial residual to the tonal correction.
The predicted luminance is defined as:
\begin{equation}
    \Yhat(p) \;=\;
    \clip\!\Bigl(\,\Ttheta\bigl(Y(p)\bigr) + \rtheta(p)\,\Bigr),
    \qquad
    \bigl|\,\rtheta(p)\,\bigr| \;\leq\; \rho,
    \label{eq:compose}
\end{equation}
where $\rtheta : \Omega \to \mathbb{R}$ is produced by the U-Net decoder
with a $\rho\tanh(\cdot)$ output head, and $\rho > 0$ is a hyperparameter
controlling the magnitude of permitted local deviation.   

\subsubsection{Conditional Order Preservation}
\label{sec:conditional_order}

\begin{corollary}[Conditional local order preservation]
\label{cor:local-order}
For neighboring pixels $p, q \in \Omega$ satisfying $Y(p) < Y(q)$, the
composed mapping in~\eqref{eq:compose} preserves their luminance order if
\begin{equation}
    \Ttheta\!\bigl(Y(q)\bigr) - \Ttheta\!\bigl(Y(p)\bigr)
    \;>\;
    \bigl|\,\rtheta(p) - \rtheta(q)\,\bigr|.
    \label{eq:order_condition}
\end{equation}
\end{corollary}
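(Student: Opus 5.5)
The plan is to compare the pre-clip values directly and then push the inequality through the clipping operator. Write $a_p := \Ttheta(Y(p)) + \rtheta(p)$ and $a_q := \Ttheta(Y(q)) + \rtheta(q)$, so that $\Yhat(p) = \clip(a_p)$ and $\Yhat(q) = \clip(a_q)$ by~\eqref{eq:compose}. Note that Lemma~\ref{lem:strict-mono} already gives $\Ttheta(Y(q)) - \Ttheta(Y(p)) > 0$ whenever $Y(p) < Y(q)$. So condition~\eqref{eq:order_condition} only asks that this positive gap dominate the residual discrepancy.

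\textbf{Step 1: the gap of the pre-clip values.} Expanding,
\[
a_q - a_p \;=\; \bigl(\Ttheta(Y(q)) - \Ttheta(Y(p))\bigr) \;-\; \bigl(\rtheta(p) - \rtheta(q)\bigr).
\]
Bound the second term by $\rtheta(p) - \rtheta(q) \le |\rtheta(p) - \rtheta(q)|$. Then~\eqref{eq:order_condition} gives $a_q - a_p > 0$, that is, $a_p < a_q$. Neighbourliness of $p,q$ plays no role here; it matters only for interpretation, since for adjacent pixels the residual difference is expected to be small.

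\textbf{Step 2: passing through the clip.} The map $\clip(x) = \min(\max(x,0),1)$ is non-decreasing, so $a_p < a_q$ yields $\Yhat(p) \le \Yhat(q)$. In particular the order is never reversed. Moreover, $\clip$ is strictly increasing on $[0,1]$ and constant outside it. Hence the inequality is strict, $\Yhat(p) < \Yhat(q)$, unless both $a_p, a_q \le 0$ or both $a_p, a_q \ge 1$. In those cases the two pixels are merged into the same saturated value and no reversal occurs.

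\textbf{Main obstacle.} The only delicate point is the clipping step. Without qualification, ``preserves their luminance order'' would assert strict inequality, and that can fail when both composed values saturate at the same endpoint. I would therefore state the conclusion as ``no order reversal'' ($\Yhat(p) \le \Yhat(q)$), with strict preservation whenever at least one of $a_p, a_q$ lies in $(0,1)$. This matches the paper's distinction in \S\ref{sec:degradation} between recoverable and saturated values.

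Finally, I would remark that the condition is sufficient but not necessary. A sufficient uniform version follows from $|\rtheta| \le \rho$: since $|\rtheta(p) - \rtheta(q)| \le 2\rho$, condition~\eqref{eq:order_condition} holds whenever $\Ttheta(Y(q)) - \Ttheta(Y(p)) > 2\rho$. By the proof of Lemma~\ref{lem:strict-mono}, this is in turn implied by $\varepsilon\,(Y(q) - Y(p))/Z > 2\rho$, where $Z = \int_0^1 \mtheta(t)\,\mathrm{d}t$.
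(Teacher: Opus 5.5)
Your proof is correct and takes the same route as the paper: you show that~\eqref{eq:order_condition} makes the pre-clip gap $a_q - a_p$ positive, then pass the inequality through the non-decreasing $\clip$. Your handling of the clip step is sharper than the paper's, which says strictness holds only when both pre-clip values lie in $(0,1)$; as you note, equality $\Yhat(p)=\Yhat(q)$ can occur only when $a_p$ and $a_q$ both saturate at the same endpoint.
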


\begin{proof}
Subtracting $\Yhat(p) = \Ttheta(Y(p)) + \rtheta(p)$ from
$\Yhat(q) = \Ttheta(Y(q)) + \rtheta(q)$ and requiring positivity yields the
condition.  
The clipping operation is non-decreasing; therefore it cannot reverse the order, but it may collapse strict inequalities into equalities when either value saturates. Strict local order preservation holds only when both pre-clipped values remain inside (0,1).
\end{proof}

The smoothness regularizer $\mathcal{L}_{\mathrm{smooth}}$ (\S\ref{sec:objective})
keeps $|r_\theta(p)-r_\theta(q)|$ small so that~\eqref{cor:local-order} holds at most
neighboring pairs, though this is empirical, not guaranteed.

\subsubsection{Dual-Branch Decoder with Convex Fusion}
\label{sec:dual_branch}

Exposure-degraded regions fall into two qualitatively different regimes:
\emph{under-exposed regions}, where content is present but compressed into
low luminance values and can in principle be recovered by inverse-mapping,
and \emph{over-exposed or clipped regions}, where content must be
hallucinated from contextual priors.  A single feature path in the decoder
must handle both regimes, which imposes competing requirements on the
intermediate representations.

We address this by maintaining, at each decoder scale $i$, two parallel
convolutional sub-networks that process the shared encoder feature
$F^i \in \mathbb{R}^{C_i \times H_i \times W_i}$:
\begin{equation}
    U^i \;=\; h_{\text{ue}}(F^i), \qquad O^i \;=\; h_{\text{oe}}(F^i),
    \label{eq:branches}
\end{equation}
where $h_\text{ue}$ specializes in shadow recovery and $h_\text{oe}$ in
highlight restoration.  A two-way softmax gate predicts spatially varying
convex weights:
\begin{equation}
    \bigl(\alpha_\text{ue}^i(p),\; \alpha_\text{oe}^i(p)\bigr)
    \;=\; \operatorname{softmax}\!\bigl(s_\text{ue}^i(p),\; s_\text{oe}^i(p)\bigr),
    \label{eq:gate}
\end{equation}
and the fused feature is the convex combination
\begin{equation}
    F^i_\text{fuse}(p)
    \;=\; \alpha_\text{ue}^i(p)\,U^i(p)
       + \alpha_\text{oe}^i(p)\,O^i(p).
    \label{eq:fuse}
\end{equation}

\begin{lemma}[No-extrapolation fusion]
\label{lem:convex-hull}
For each pixel $p \in \Omega$ and scale $i$, the fused feature
$F^i_\text{fuse}(p)$ lies on the line segment between $U^i(p)$ and
$O^i(p)$.  In particular, $\|F^i_\text{fuse}(p)\| \leq
\max\!\bigl(\|U^i(p)\|,\,\|O^i(p)\|\bigr)$ for any norm.
\end{lemma}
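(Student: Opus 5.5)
The plan is to read the convexity of the weights directly off the softmax gate~\eqref{eq:gate} and then apply the triangle inequality together with absolute homogeneity of the norm.

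\emph{Step 1: the weights form a convex pair.} Write $\alpha := \alpha_\text{ue}^i(p)$. By definition of the two-way softmax,
\[
\alpha = \frac{e^{s_\text{ue}^i(p)}}{e^{s_\text{ue}^i(p)} + e^{s_\text{oe}^i(p)}} \in (0,1),
\qquad
\alpha_\text{oe}^i(p) = 1-\alpha .
\]
Both weights are strictly positive because exponentials are positive, and they sum to one by construction.

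\emph{Step 2: the segment claim.} Substituting into~\eqref{eq:fuse} gives
\[
F^i_\text{fuse}(p) = \alpha\,U^i(p) + (1-\alpha)\,O^i(p), \qquad \alpha \in [0,1].
\]
This is exactly the parametrization of the closed segment between $U^i(p)$ and $O^i(p)$, so the first assertion follows immediately. In fact the point lies in the relative interior of the segment, unless $U^i(p) = O^i(p)$.

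\emph{Step 3: the norm bound.} For any norm $\|\cdot\|$, the triangle inequality and homogeneity (using $\alpha, 1-\alpha \ge 0$) give
\[
\|F^i_\text{fuse}(p)\| \le \alpha\,\|U^i(p)\| + (1-\alpha)\,\|O^i(p)\| \le \max\bigl(\|U^i(p)\|,\|O^i(p)\|\bigr).
\]
The second inequality holds because a convex combination of two reals never exceeds their maximum. Equivalently, this is the convexity of the norm: a convex function on a segment attains its maximum at an endpoint.

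There is no genuine obstacle here. The only point needing care is that the weights are nonnegative, which is what licenses pulling them out of the norm without absolute values, and that they sum exactly to one. Both facts are guaranteed structurally by the softmax, independent of the learned scores $s^i_\bullet$. I would also note that the argument is coordinate-free, so it holds for every channel vector in $\mathbb{R}^{C_i}$ and for every scale $i$ simultaneously.
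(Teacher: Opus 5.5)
Your proof is correct and follows the paper's argument: the softmax gate gives nonnegative weights summing to one, so $F^i_\text{fuse}(p)$ is a convex combination lying on the segment, and the norm bound follows from convexity of the norm, which you spell out via the triangle inequality and homogeneity. You also note that the weights are strictly positive, which places the point in the relative interior of the segment when $U^i(p)\neq O^i(p)$; this is a correct refinement that the paper does not state.
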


\begin{proof}
Since $\alpha_\text{ue}^i(p) + \alpha_\text{oe}^i(p) = 1$ with both weights
non-negative, $F^i_\text{fuse}(p)$ is a convex combination of $U^i(p)$ and
$O^i(p)$, which in $\mathbb{R}^{C_i}$ lies on the line segment connecting
them.  The norm bound follows from the convexity of $\|\cdot\|$.
\end{proof}

Lemma~\ref{lem:convex-hull} is a \emph{stability} property: it ensures that
the decoder cannot synthesize features lying outside the span of the two
branch hypotheses at any spatial location.  It is distinct from, and
complementary to, the order-preservation result of
Proposition~\ref{prop:order}, which governs the tonal component.

\subsection{Network Architecture}
\label{sec:arch}
\subsubsection{Encoder--Decoder Backbone}
\label{sec:backbone}
The encoder $\mathcal{E}$ extracts a feature pyramid
$\{F^0, F^1, F^2, F^3, F^4\}$ from input $I$ at strides
$1$, $2$, $4$, $8$, $16$ with channel widths $d$, $2d$, $4d$, $8d$, $16d$.
The decoder $\mathcal{D}$ mirrors this structure, reconstructing spatial
detail through progressive upsampling with skip connections.

The theoretical guarantees of this work
(Lemma~\ref{lem:strict-mono}, Propositions~\ref{prop:order}
and~\ref{prop:ot-in-family}) are \emph{encoder-agnostic}: they depend
only on how the tone-curve weights $\{w_k\}$ are computed from the
pooled global descriptor, not on the backbone architecture.  Any
encoder that produces a multi-scale pyramid and a global descriptor
$g = \operatorname{AvgPool}(F^4) \in \mathbb{R}^d$ is compatible---we
evaluate ResNet-18, ResNet-34~\cite{He2016}, and
NAFNet~\cite{Chen2022nafnet} in the ablation study
(\S\ref{sec:ablation}).  Unless stated otherwise, we use ResNet-34
with ImageNet-pretrained weights as the default encoder for its
favorable accuracy--speed tradeoff (Table~\ref{tab:hyperparam}).

\begin{figure*}[!t]
  \centering
  \includegraphics[width=\textwidth]{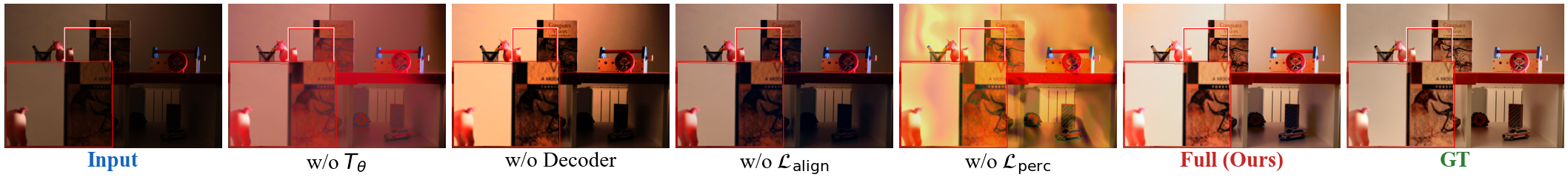}
  \vspace{-16pt}
\caption{\textbf{Visual ablation study} on SICE Scene~257.
Removing the tone curve (\emph{w/o $T_\theta$}) leaves the image
dark, confirming that the bounded residual
($\rho{=}0.20$) alone cannot correct large global exposure
shifts.  Removing the decoder (\emph{w/o Decoder}) yields a
globally lifted but flat result with no local contrast.
Removing the OT alignment loss (\emph{w/o
$\mathcal{L}_{\mathrm{align}}$}) produces desaturated,
tonally inaccurate output.  Removing the perceptual loss
(\emph{w/o $\mathcal{L}_{\mathrm{perc}}$}) causes severe colour
bleeding and structural distortion, confirming its role as a
critical regulariser.  The full model recovers both global
exposure and local detail, closely matching the GT.
Zoomed insets highlight the book-cover text, which is legible
only in the full model and the GT.}
\vspace{-16pt}
\label{fig:ablation_visual}
\end{figure*}

\subsubsection{Tone-Curve Head}
\label{sec:tone_head}

The tone-curve head receives the global image descriptor
$g = \operatorname{AvgPool}(F^4) \in \mathbb{R}^d$ and produces the bin
weights via a two-layer MLP with GELU activations:
\[
    \mathbf{v} \;=\; W_2\,\sigma(W_1 g + b_1) + b_2
    \;\in\; \mathbb{R}^K,
\]
followed by $w_k = \operatorname{softplus}(v_k) + \varepsilon$ and
normalization $w_k \leftarrow w_k / \sum_j w_j$.  The tone curve is then
evaluated by piecewise linear interpolation as described in
\S\ref{sec:tone_param}.  We use $K = 64$ bins and $\varepsilon = 10^{-3}$
throughout.

\subsubsection{Residual and Chroma Heads}
\label{sec:residual_head}

The decoder produces a spatial feature map $F_\text{dec} \in
\mathbb{R}^{d \times H \times W}$ at full input resolution by progressive
PixelShuffle upsampling~\cite{Shi2016} with skip connections from the
encoder.  Two separate $3\times 3$ convolutional heads applied to
$F_\text{dec}$ produce:
\begin{align}
    r_\theta  &= \rho \cdot \tanh\bigl(\operatorname{Conv}_Y(F_\text{dec})\bigr)
                 \;\in\; [-\rho,\,\rho]^{H \times W},  \\
    r_c       &= \delta \cdot \tanh\bigl(\operatorname{Conv}_C(F_\text{dec})\bigr)
                 \;\in\; [-\delta,\,\delta]^{H \times W \times 2},
\end{align}
where $\tanh$ enforces the magnitude bound \emph{structurally} rather than
by penalization.  Default hyperparameters are $\rho = 0.20$ and
$\delta = 0.08$.

\subsubsection{RGB Reconstruction}
\label{sec:rgb_recon}

Given $\Yhat$ from~\eqref{eq:compose}, the enhanced RGB image is formed in
two steps.  First, the global luminance correction is propagated to RGB
by proportional scaling:
\begin{equation}
    \hat{I}_{\text{lum}}(p)
    \;=\;
    \operatorname{clip}_{[0,1]}\!\!\left(
        I(p) \cdot \frac{\Yhat(p) + \varepsilon_0}{Y(p) + \varepsilon_0}
    \right),
\end{equation}
with $\varepsilon_0 = 10^{-4}$ for numerical stability.  Second, a small
chrominance correction is applied to the R and B channels:
\[
    \hat{R}(p) = \clip(\hat{I}_{\text{lum}, R}(p) + r_c^{(1)}(p)), \]
    \[
    \quad
    \hat{B}(p) = \clip(\hat{I}_{\text{lum}, B}(p) + r_c^{(2)}(p)),
\]
with the G channel held unchanged to preserve the luminance-defined
correction.

\subsubsection{Implementation Details}
\label{sec:impl}

The guarantees of \S III are backbone-agnostic: any encoder producing a
multi-scale pyramid and a pooled global descriptor $g=\mathrm{AvgPool}(F^4)$
is admissible, and we evaluate ResNet-18, ResNet-34, and a NAFNet-style
backbone in the ablation of \S IV-C. Unless stated otherwise, the default is
a ResNet-style encoder-decoder with $4$ levels, encoder block counts
$(2,2,4,8)$ and base width $d=48$ (channel depths
$48\text{-}96\text{-}192\text{-}384\text{-}768$), and a mirrored decoder
with block counts $(2,2,2,2)$. The full model has $\sim\!23\times10^{6}$
learnable parameters and runs in single precision. 

\begin{figure}[!t]
  \centering
  \includegraphics[width=\columnwidth]{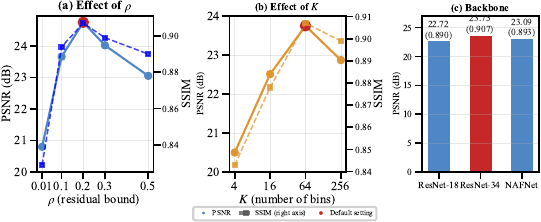}
  \vspace{-16pt}
  \caption{\textbf{Hyperparameter sensitivity.}
(a)~Effect of the residual bound $\rho$: too small
($\rho{=}0.01$) restricts local correction and loses
$3.95$\,dB; too large ($\rho{=}0.50$) permits artifacts and
loses $1.70$\,dB; the optimum is $\rho{=}0.20$.
(b)~Effect of the number of tone-curve bins $K$: performance
plateaus at $K{=}64$; $K{=}256$ overfits ($-0.88$\,dB).
(c)~Backbone comparison: ResNet-34 with ImageNet pretraining
outperforms both the lighter ResNet-18 ($-1.03$\,dB) and
NAFNet without pretraining ($-0.66$\,dB), indicating that the
framework benefits more from strong initialisation than from
task-specific block design.
PSNR (solid, left axis) and SSIM (dashed, right axis) are
reported on the MSEC dataset.}
 \vspace{-16pt}
\label{fig:hyperparam}
\end{figure}

\subsection{Unified Exposure-Aware Training Objective}
\label{sec:objective}

The training loss $\calL$ combines four terms that collectively enforce
pixel fidelity, distributional alignment with natural exposure statistics,
perceptual quality, and spatial regularity of the residual:
\begin{equation}
    \calL
    \;=\;
    \lambda_\text{rec}\,\calL_\text{rec}
    + \lambda_\text{align}\,\calL_\text{align}
    + \lambda_\text{perc}\,\calL_\text{perc}
    + \lambda_\text{smooth}\,\calL_\text{smooth}.
    \label{eq:objective}
\end{equation}

\subsubsection{Reconstruction Loss}
The reconstruction loss combines the Charbonnier (smooth $L_1$) distance
and the complement of the structural similarity index:
\begin{equation}
    \calL_\text{rec}
    \;=\;
    \frac{1}{|\Omega|}\sum_p \sqrt{\|\Ihat(p)-\Istar(p)\|^2+\varepsilon_c^2}
    \;+\;
    \lambda_\text{ssim}\bigl(1 - \operatorname{SSIM}(\Ihat, \Istar)\bigr),
    \label{eq:rec}
\end{equation}
with $\varepsilon_c = 10^{-3}$.  The Charbonnier term provides robust
pixel-level fidelity with improved handling of outliers such as clipped
highlights; the SSIM term reinforces structural fidelity in terms of
luminance, contrast, and local correlation.

\subsubsection{Distribution Alignment Loss}
The distributional alignment term is as defined in~\eqref{eq:w2}, implemented
as the sorted-sample squared Wasserstein-2 distance between the predicted
luminance distribution and the target $\mathcal{P}^\star$:
\begin{equation}
    \calL_\text{align}
    \;=\;
    \frac{1}{N}\sum_{n=1}^{N}
    \bigl(\Yhat_{(n)} - \nu_{(n)}\bigr)^2.
    \label{eq:align_obj}
\end{equation}
By Proposition~\ref{prop:ot-in-family}, minimizing $\calL_\text{align}$ over
$\calT$ drives $\Ttheta$ toward the optimal-transport map $T^\star$.

\subsubsection{Perceptual Loss}
Pixel-level fidelity alone does not guarantee perceptually natural textures.
Following Johnson et al.~\cite{Johnson2016}, we impose a feature-level
consistency term using activations from a pretrained VGG-16 network
$\phi_\ell$:
\begin{equation}
    \calL_\text{perc}
    \;=\;
    \sum_{\ell \,\in\, \mathcal{L}_\text{VGG}}
    \bigl\|\phi_\ell(\Ihat) - \phi_\ell(\Istar)\bigr\|_2^2,
    \label{eq:perc}
\end{equation}
where $\mathcal{L}_\text{VGG} = \{\texttt{relu1\_2},\;\texttt{relu2\_2},\;
\texttt{relu3\_3}\}$.

\subsubsection{Residual Smoothness Regularization}
To encourage the sufficient condition of Corollary~\ref{cor:local-order} to
hold empirically at neighboring pixel pairs, we penalize large spatial
gradients of $\rtheta$:
\begin{equation}
    \calL_\text{smooth}
    \;=\;
    \sum_{p \in \Omega}
    \bigl\|\nabla \rtheta(p)\bigr\|_2^2,
    \label{eq:smooth}
\end{equation}
where $\nabla$ denotes the finite-difference spatial gradient operator.
The smoothness penalty applies to $\rtheta$ exclusively: the tone map
$\Ttheta$ has no spatial parameters, so no analogous term is needed for it.

\subsubsection{Omission of a Monotonicity Penalty}
A soft monotonicity penalty of the form $\calL_\text{mono} = \sum_p
\max(0, \varepsilon - a(p))$, as used in some prior
works~\cite{anonymous_arxiv}, is deliberately absent from~\eqref{eq:objective}.
Monotonicity of $\Ttheta$ is guaranteed structurally by
Lemma~\ref{lem:strict-mono}, rendering any additional penalty redundant.
Moreover, as Remark~\ref{rem:why-global} shows, per-pixel positivity
penalties do not imply spatial order preservation, so including them would
convey false security. Their removal converts a soft, potentially violated
heuristic into a hard architectural guarantee.

\subsubsection{Loss Weight Selection}
Hyperparameters are set as $\lambda_\text{rec} = 1.0$,
$\lambda_\text{align} = 0.1$, $\lambda_\text{perc} = 0.01$,
$\lambda_\text{smooth} = 0.0$, $\lambda_\text{ssim} = 0.2$.
The rationale: $\calL_\text{rec}$ dominates to prioritize pixel fidelity
(PSNR); $\calL_\text{align}$ is small because it saturates early in training;
$\calL_\text{perc}$ is intentionally small to avoid the well-known
PSNR--perceptual tradeoff~\cite{Blau2018}.
\begin{remark}[Scope of guarantees]
\label{rem:scope}
The guarantees above are subject to four explicit limitations:
(i)~clipped content ($\Omega \setminus S$) is not invertible---restoration
there is prior-based estimation;
(ii)~local order preservation under the residual is conditional
(Corollary~\ref{cor:local-order}), not unconditional;
(iii)~the target distribution $\mathcal{P}^\star$ is a modeling choice,
not a theorem; and
(iv)~branch specialization of $h_{\mathrm{ue}}$ and $h_{\mathrm{oe}}$ is
empirical (\S\ref{sec:ablation}), not guaranteed by
Lemma~\ref{lem:convex-hull}.
\end{remark}

\begin{algorithm}[t]
\caption{AutoLumNet: forward pass and training step}
\label{alg:autolumnet}
\begin{algorithmic}[1]
\REQUIRE $I\in[0,1]^{H\times W\times3}$, target law $\nu$, encoder $\mathcal{E}$,
  decoder $\mathcal{D}$, bounds $\rho,\delta$, floor $\varepsilon$, bins $K$
\ENSURE prediction $\hat I$ and one parameter update
\STATE $(Y,C)\gets\mathrm{LumaChroma}(I)$ \COMMENT{Rec.\ 601}
\STATE $\{F^0,\dots,F^4\}\gets\mathcal{E}(I)$;\quad $g\gets\mathrm{AvgPool}(F^4)$
\STATE \textit{// Global tone curve (P1, P2)}
\STATE $v\gets\mathrm{MLP}(g)$;\quad $w_k\gets\mathrm{softplus}(v_k)+\varepsilon$;\quad
  $w\gets w/\sum_j w_j$
\STATE $c_0\gets0,\ c_k\gets\sum_{j\le k}w_j$;\quad
  $T_\theta\gets\mathrm{PLInterp}(\{(k/K,c_k)\})$
\STATE $T_\theta(Y)\gets$ apply $T_\theta$ pixelwise to $Y$
\STATE \textit{// Local residual (P3)}
\FOR{each decoder scale $i$}
  \STATE $U^i\gets h_{\mathrm{ue}}(F^i),\ O^i\gets h_{\mathrm{oe}}(F^i)$
  \STATE $(\alpha^i_{\mathrm{ue}},\alpha^i_{\mathrm{oe}})\gets
    \mathrm{softmax}(s^i_{\mathrm{ue}},s^i_{\mathrm{oe}})$
  \STATE $F^i_{\mathrm{fuse}}\gets
    \alpha^i_{\mathrm{ue}}U^i+\alpha^i_{\mathrm{oe}}O^i$
\ENDFOR
\STATE $F_{\mathrm{dec}}\gets\mathcal{D}(\{F^i_{\mathrm{fuse}}\})$
\STATE $r_\theta\gets\rho\tanh(\mathrm{Conv}_Y(F_{\mathrm{dec}})),\
  r_c\gets\delta\tanh(\mathrm{Conv}_C(F_{\mathrm{dec}}))$
\STATE $\hat Y\gets\mathrm{clip}_{[0,1]}(T_\theta(Y)+r_\theta)$
\STATE $\hat I\gets\mathrm{RGBRecon}(\hat Y,Y,C,r_c)$ \COMMENT{Eq.~(13)}
\STATE \textit{// Training objective}
\STATE $\mathcal{L}_{\mathrm{align}}\gets
  \frac1N\sum_n(\hat Y_{(n)}-\nu_{(n)})^2$ \COMMENT{sorted-sample $W_2^2$}
\STATE $\mathcal{L}\gets\lambda_{\mathrm{rec}}\mathcal{L}_{\mathrm{rec}}
  +\lambda_{\mathrm{align}}\mathcal{L}_{\mathrm{align}}
  +\lambda_{\mathrm{perc}}\mathcal{L}_{\mathrm{perc}}
  +\lambda_{\mathrm{smooth}}\mathcal{L}_{\mathrm{smooth}}$
\STATE $\theta\gets\theta-\eta\,\nabla_\theta\mathcal{L}$
\RETURN $\hat I$
\end{algorithmic}
\end{algorithm}

\section{Experiments}
\label{sec:experiments}
\vspace{8pt}

%

\begin{figure*}[p]
  \centering
  \includegraphics[width=\textwidth]{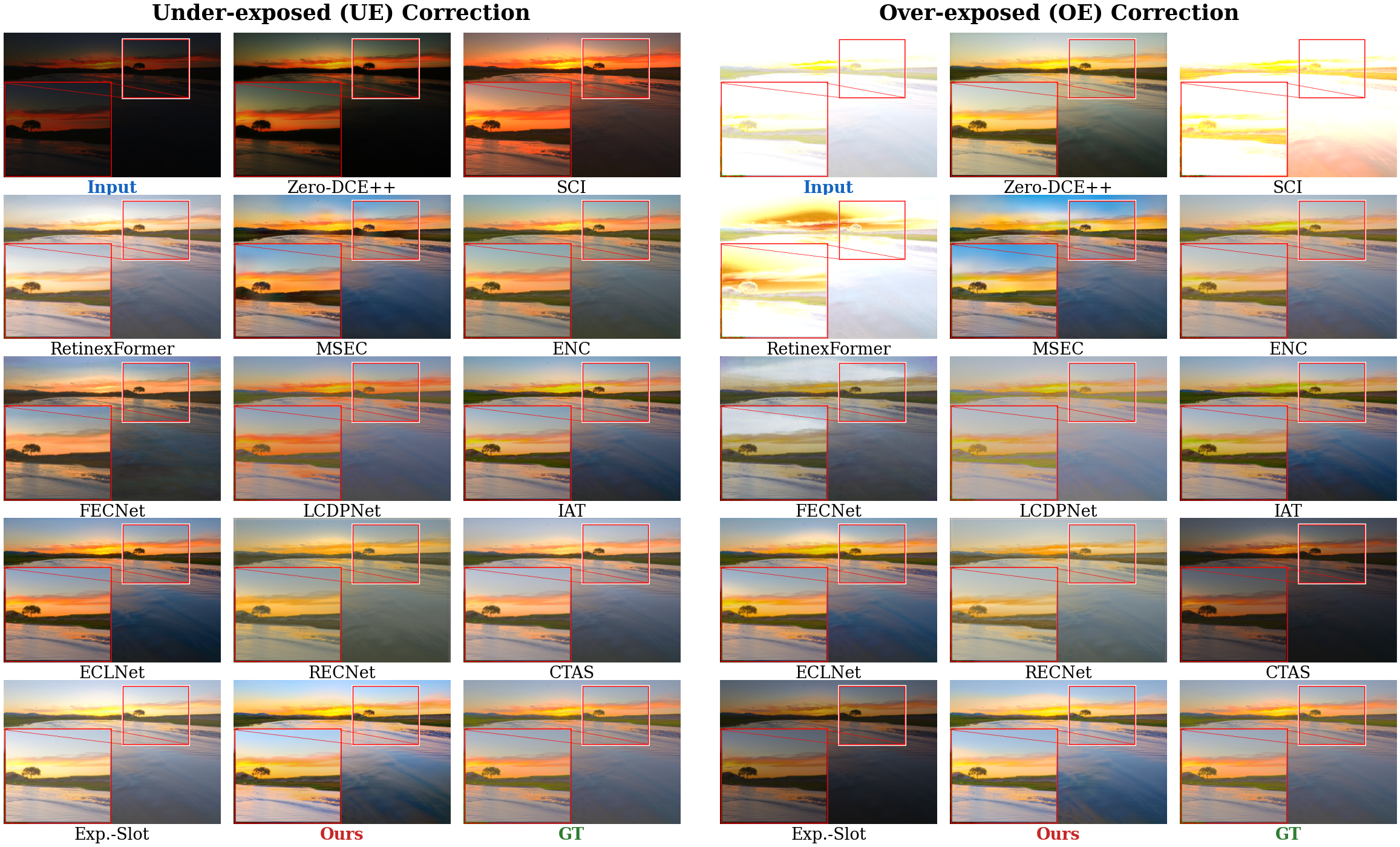}
  \caption{Qualitative comparison on SICE Scene~4 (outdoor landscape).
  \textbf{Left block}: under-exposed (UE) input.
  \textbf{Right block}: over-exposed (OE) input.
  Zoomed insets highlight the island trees on the horizon.
  UE-only methods applied to the OE input produce severe artifacts:
  SCI~\cite{sci} yields a complete white-out,
  RetinexFormer~\cite{retinexformer} generates cloudy banding, and
  Zero-DCE++~\cite{zerodcepp} further amplifies the overexposure.
  Among multi-exposure methods, LCDPNet~\cite{lcdpnet} introduces a
  green colour shift on UE, while ECLNet~\cite{eclnet} and
  RECNet~\cite{recnet} over-saturate the sky.
  AutoLumNet closely matches the GT in both exposure directions,
  preserving natural colour and fine structural detail in the zoomed region.}
  \label{fig:qual_scene4}
\end{figure*}

%
%

%
%
\definecolor{Gray}{gray}{0.93}
\newcommand{\gray}{\rowcolor{Gray}}
\begin{table*}[p]
\centering
\caption{Quantitative comparison on the MSEC~\cite{afifi2021msec}, SICE~\cite{cai2018sice},
and LCDP~\cite{lcdpnet} datasets.
$\uparrow$: higher is better; $\downarrow$: lower is better.
\textbf{Bold}: best; \underline{underline}: second best.
Group~A methods target low-light only;
Group~B methods handle both under- and over-exposure.}
\label{tab:sota_ref}
\setlength{\tabcolsep}{2.2pt}
\renewcommand{\arraystretch}{1.12}
\small
\resizebox{\textwidth}{!}{%
\begin{tabular}{@{}l
    cc cc ccc
    cc cc ccc
    cc
  @{}}
\toprule
  & \multicolumn{7}{c}{\textbf{MSEC Dataset}}
  & \multicolumn{7}{c}{\textbf{SICE Dataset}}
  & \multicolumn{2}{c}{\textbf{LCDP}} \\
\cmidrule(lr){2-8} \cmidrule(lr){9-15} \cmidrule(l){16-17}
  & \multicolumn{2}{c}{Under-exp.}
  & \multicolumn{2}{c}{Over-exp.}
  & \multicolumn{3}{c}{Average}
  & \multicolumn{2}{c}{Under-exp.}
  & \multicolumn{2}{c}{Over-exp.}
  & \multicolumn{3}{c}{Average}
  & \multicolumn{2}{c}{Average} \\
\cmidrule(lr){2-3}\cmidrule(lr){4-5}\cmidrule(lr){6-8}
\cmidrule(lr){9-10}\cmidrule(lr){11-12}\cmidrule(lr){13-15}
\cmidrule(l){16-17}
  \textbf{Method}
  & {\scriptsize PSNR}$\uparrow$ & {\scriptsize SSIM}$\uparrow$
  & {\scriptsize PSNR}$\uparrow$ & {\scriptsize SSIM}$\uparrow$
  & {\scriptsize PSNR}$\uparrow$ & {\scriptsize SSIM}$\uparrow$ & {\scriptsize LPIPS}$\downarrow$
  & {\scriptsize PSNR}$\uparrow$ & {\scriptsize SSIM}$\uparrow$
  & {\scriptsize PSNR}$\uparrow$ & {\scriptsize SSIM}$\uparrow$
  & {\scriptsize PSNR}$\uparrow$ & {\scriptsize SSIM}$\uparrow$ & {\scriptsize LPIPS}$\downarrow$
  & {\scriptsize PSNR}$\uparrow$ & {\scriptsize SSIM}$\uparrow$ \\
\midrule
\multicolumn{17}{l}{\textit{Group A --- Low-light enhancement}} \\[2pt]
Zero-DCE++~\cite{zerodcepp}
  & 13.82 & .589 & 9.74  & .514 & 11.37 & .558 & .312
  & 11.93 & .476 & 6.88  & .409 & 9.41  & .442 & .362
  & 18.42 & .767 \\
SCI~\cite{sci}
  & 9.97  & .668 & 5.84  & .519 & 7.49  & .579 & .312
  & 17.86 & .640 & 4.45  & .363 & 12.49 & .505 & .424
  & 11.87 & .523 \\
RetinexFormer~\cite{retinexformer}
  & 15.42 & .793  & 7.37  & .387  & 11.31 & .586  & .342
  & 17.19 & .761 & 5.51 & 0.315 & 11.35 & 0.538  & .405
  & 12.61 & .653 \\
\midrule
\multicolumn{17}{l}{\textit{Group B --- Multi-exposure correction}} \\[2pt]
MSEC~\cite{afifi2021msec}
  & 20.52 & .813 & 19.79 & .816 & 20.08 & .821 & .172
  & 19.62 & .651 & 17.59 & .656 & 18.58 & .654 & .281
  & 20.38 & .780 \\
ENC~\cite{enc}
  & 22.72 & .854 & 22.11 & .852 & 22.35 & .853 & .172
  & 21.89 & .707 & 19.09 & .723 & 20.49 & .715 & .232
  & 22.09 & .827 \\
FECNet~\cite{fecnet}
  & 22.96 & .860 & 23.22 & .875 & 23.12 & .869 & .142
  & 22.01 & .674 & 19.91 & .696 & 20.96 & .685 & .266
  & 22.41 & .840 \\
LCDPNet~\cite{lcdpnet}
  & 22.35 & {.865} & 22.17 & .848 & 22.30 & .855 & .145
  & 17.45 & .562 & 17.04 & .646 & 17.25 & .604 & .259
  & 23.24 & .842 \\
IAT~\cite{iat}
  & 20.21 & .808  & 20.47 & \underline{.880}  & 20.34 & .844 & .135
  & 20.32   & .802   & 18.42   & .758   & 19.37 & .780 & .297
  & 19.76 & .781 \\
ECLNet~\cite{eclnet}
  & 22.37 & .857 & 22.70 & .867 & 22.57 & .863 & .098
  & 22.05 & .689 & 19.25 & .687 & 20.65 & .686 & .154
  & 22.44 & .806 \\
RECNet~\cite{recnet}
  & \underline{23.57} & \underline{.866} & \textbf{23.81} & {.877}
  & \underline{23.69} & {.870} & .438
  & {22.50} & {.825} & {20.81} &{.796}
  & {21.66} & {.731} & .549
  & {23.54} & .847 \\
CTAS~\cite{ctas}
  & {23.36} & .863 & {23.49} & {.879}
  & {23.44} & \underline{.873} & {.123}
  & \underline{22.90} & .823 & 20.13 & .767
  & 21.51 & .715 & {.132}
  & {23.89} & .858 \\
Exposure-Slot~\cite{expslot}
  & 23.09 & .860 & 23.24 & .876
  & 23.18 & .870 & \underline{.083}
  & \textbf{23.85} & \textbf{.856} & \underline{21.77} & \underline{.798}
  & \underline{22.81} & \underline{.824} & \underline{.126}
  & \underline{24.03} & \underline{.859} \\
\midrule
\rowcolor[gray]{0.93}
\textbf{AutoLumNet (Ours)}
  & \textbf{23.86} & \textbf{.910} & \underline{23.64} & \textbf{.904}
  & \textbf{23.75} & \textbf{.907} & \textbf{.074}
  & 22.56 & \underline{.833}  & \textbf{23.26} & \textbf{.904}
  & \textbf{22.91} & \textbf{.894} & \textbf{.076}
  & \textbf{24.12} & \textbf{.914} \\
\bottomrule
\end{tabular}}
\end{table*}

\begin{figure*}[p]
  \centering
  \includegraphics[width=\textwidth]{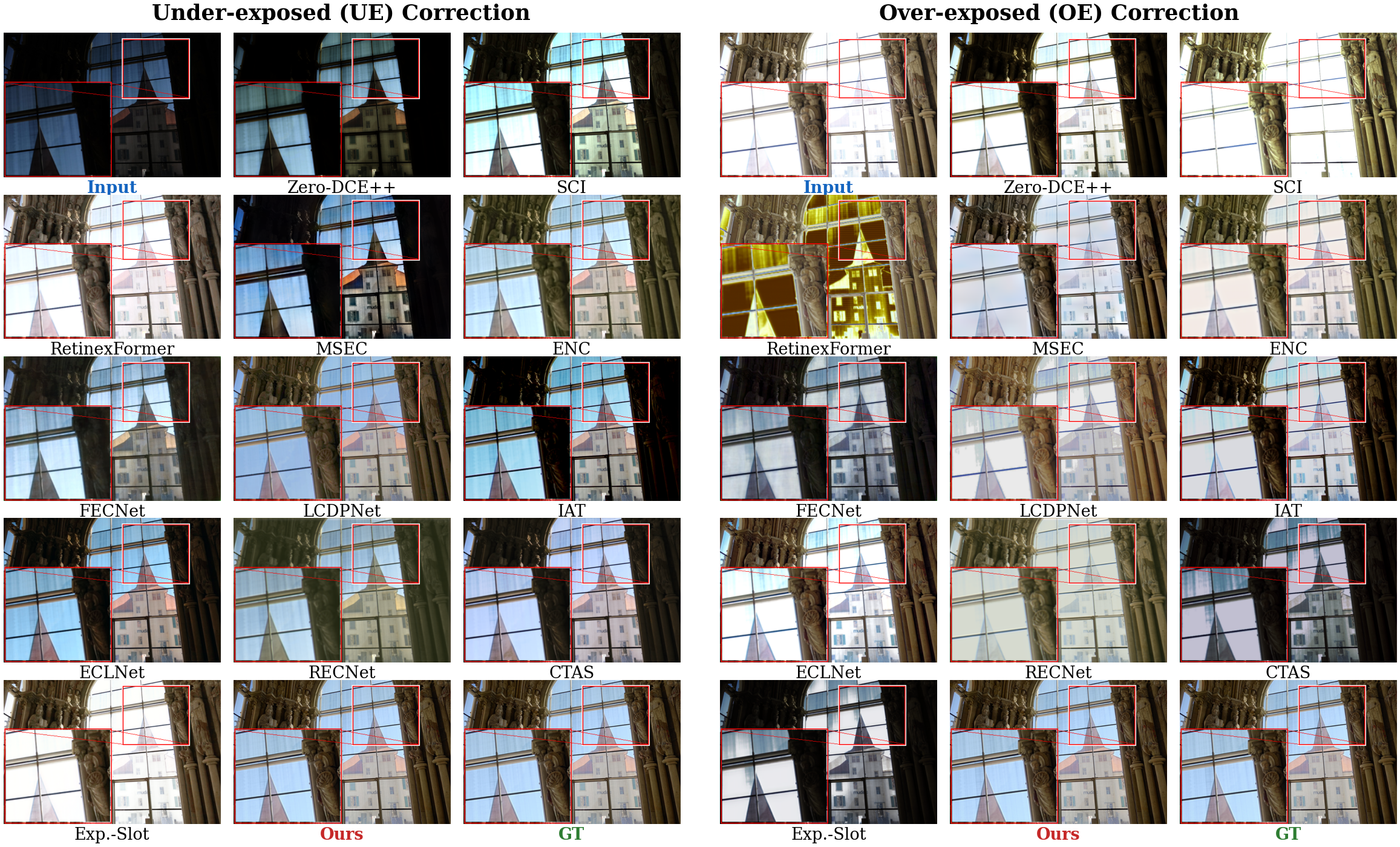}
  \caption{Qualitative comparison on SICE Scene~40 (indoor, building seen
  through a window).
  The zoomed inset targets the building fa\c{c}ade visible through the glass,
  testing fine-structure and colour fidelity.
  On the UE input, SCI~\cite{sci} produces a flat yellow cast with no
  visible building detail; FECNet~\cite{fecnet} introduces reddish colour
  distortion.
  On the OE input, RetinexFormer~\cite{retinexformer} produces strong
  golden artefacts through the window; Zero-DCE++~\cite{zerodcepp}
  over-brightens the already saturated glass.
  AutoLumNet recovers the building detail in both cases, with colour
  and contrast closely matching the GT.
  Best viewed zoomed in on screen.}
  \label{fig:qual_scene40}
\end{figure*}
\begin{table*}[p]
\centering
\caption{No-reference perceptual quality comparison on the MSEC, SICE, and LCDP datasets.
NIQE~\cite{niqe} measures naturalness,
BRISQUE~\cite{brisque} measures spatial quality,
and PI is the Perceptual Index~\cite{pi}.
All three metrics are \emph{lower is better}.
\textbf{Bold}: best; \underline{underline}: second best.}
\label{tab:sota_noref}
\setlength{\tabcolsep}{4.5pt}
\renewcommand{\arraystretch}{1.12}
\small
\begin{tabular}{@{}l
    ccc
    ccc
    ccc
  @{}}
\toprule
  & \multicolumn{3}{c}{\textbf{MSEC}}
  & \multicolumn{3}{c}{\textbf{SICE}}
  & \multicolumn{3}{c}{\textbf{LCDP}} \\
\cmidrule(lr){2-4}\cmidrule(lr){5-7}\cmidrule(l){8-10}
  \textbf{Method}
  & NIQE$\downarrow$ & BRISQUE$\downarrow$ & PI$\downarrow$
  & NIQE$\downarrow$ & BRISQUE$\downarrow$ & PI$\downarrow$
  & NIQE$\downarrow$ & BRISQUE$\downarrow$ & PI$\downarrow$ \\
\midrule
\multicolumn{10}{l}{\textit{Group A — Low-light enhancement}} \\[2pt]
Zero-DCE++~\cite{zerodcepp}
  & 6.663 & 24.152 & 13.745
  & 8.384 & 33.503 & 17.560
  & 10.992 & 43.271 & 21.140 \\
SCI~\cite{sci}
  & 9.186 & 33.188 & 16.934
  & 10.349 & 42.780 & 21.076
  & 7.924 & 31.841 & 16.959  \\
RetinexFormer~\cite{retinexformer}
  & 7.850 & \textbf{14.090} & {11.120}
  & 11.612 & 25.708 & \underline{12.048}
  & 6.370 & \underline{16.396} & \underline{10.013} \\
\midrule
\multicolumn{10}{l}{\textit{Group B — Multi-exposure correction}} \\[2pt]
MSEC~\cite{afifi2021msec}
  & \underline{4.607} & 28.958 & 17.675
  & 6.284   & 27.463    & 16.878
  & 6.659   & 28.125    & 17.393 \\     
ENC~\cite{enc}
  & 5.544   & 24.081    & 14.268
  & {5.364} & 27.349 & 16.003
  & 6.242 & 30.542 & 17.150 \\
LCDPNet~\cite{lcdpnet}
  & 5.516   & 18.537    & 11.517
  & 5.415 & 23.131 & 13.858
   & \underline{5.469} & 20.203 & 12.367  \\
IAT~\cite{iat}
  & 5.989 & {15.066} & \underline{10.524}
  & 7.521 & 25.283 & 13.881
  & 9.953 & 37.563 & 18.805 \\
ECLNet~\cite{eclnet}
  & 5.508 & 20.077 & 12.285
 & 5.686 & 23.257 & 13.785
  & 6.761 & 28.867 & 16.053\\
RECNet~\cite{recnet}
  & 7.255   & 44.945    & 23.845
  & 7.245 & 47.786 & 25.270
  & 7.489 & 48.322 & 25.416 \\
CTAS~\cite{ctas}
  & 5.477   & 20.192    & 12.576
 & \underline{5.355} & {22.343} & 13.494
  & 6.024 & 21.288 & 12.632 \\
Exposure-Slot~\cite{expslot}
  & {5.227} & {16.517} & {10.645}
  & 5.450 & \underline{18.977} & \underline{11.963}
  & 7.330 & 26.401 & 14.536 \\
\midrule
\rowcolor[gray]{0.93}
\textbf{AutoLumNet (Ours)}
  & \textbf{4.576} & \underline{14.797} & \textbf{10.312}
  & \textbf{5.037} & \textbf{18.918} & \textbf{11.941}
  & \textbf{4.699} & \textbf{15.077} & \textbf{9.689} \\
\bottomrule
\end{tabular}
\vspace{-3pt}
\end{table*}

\subsection{Experimental Setup}
\label{sec:setup}

\subsubsection{Implementation Details}
AutoLumNet is implemented in PyTorch and trained on two NVIDIA Tesla
T4 GPUs using the AdamW optimizer with an initial learning rate of
$2\times10^{-4}$, cosine annealing to $1\times10^{-6}$, and a batch
size of~8.  Training patches of size $256\times256$ are randomly
cropped from the input--GT pairs; horizontal flipping is applied with
probability~0.5.  The model is trained for 300\,K iterations
($\approx$80--120 epochs depending on the dataset).
The encoder is a ResNet-34~\cite{He2016} with ImageNet-pretrained
weights; the tone-curve head uses $K{=}64$ bins and positivity floor
$\varepsilon{=}10^{-3}$; the residual decoder uses bound
$\rho{=}0.20$ and chroma bound $\delta{=}0.08$.  Loss weights are
$\lambda_{\mathrm{rec}}{=}1.0$,
$\lambda_{\mathrm{align}}{=}0.1$,
$\lambda_{\mathrm{perc}}{=}0.01$,
$\lambda_{\mathrm{ssim}}{=}0.2$, and
$\lambda_{\mathrm{smooth}}{=}0.0$ (the smoothness term is deferred to
the hyperparameter study in Sec.~\ref{sec:hyperparam}).
At inference, images are processed at their original resolution without
resizing; the only constraint is that spatial dimensions be divisible
by~16 (the total encoder stride), enforced by reflection padding.

\subsubsection{Datasets}
We evaluate on five standard benchmarks spanning multi-exposure and
low-light correction:

\textit{MSEC}~\cite{afifi2021msec} provides 24\,000 images across
5\,000 scenes, each retouched by five expert photographers at multiple
exposure levels ($\pm1.0$ and $\pm1.5$ EV).  Following the official
split, we train on 17\,675 pairs and test on 5\,905 pairs.  Results
are reported separately for under-exposed (UE; $-1.0$, $-1.5$ EV) and
over-exposed (OE; $+1.0$, $+1.5$ EV) subsets, as well as their
average.

\textit{SICE}~\cite{cai2018sice} contains 589 indoor and outdoor
scenes, each captured at 4--7 exposure levels.  We adopt the standard
Part~1 split (360 training / 229 testing scenes) with the
highest-quality exposure selected as ground truth.

\textit{LCDP}~\cite{lcdpnet} comprises 2\,400 image pairs rendered
from raw sensor data through the camera ISP pipeline, capturing
realistic local colour distribution shifts.  We use the official
train/test split and report average PSNR/SSIM.

\textit{LOL-v1}~\cite{RetinexNet} and
\textit{LOL-v2-real}~\cite{zhang2021beyond} are the standard low-light
benchmarks with 500 and 689 paired images, respectively.  We use these
exclusively for zero-shot evaluation: the model trained on MSEC is
tested on LOL \emph{without any retraining}, to assess cross-domain
generalization from multi-exposure to pure low-light correction.

\subsubsection{Comparative Methods}
We compare against 12 representative methods spanning two groups.
\emph{Group~A} (low-light only): Zero-DCE++~\cite{zerodcepp}
(TPAMI'21), SCI~\cite{sci} (NeurIPS'22), and
RetinexFormer~\cite{retinexformer} (ICCV'23).  These methods are
designed exclusively for under-exposure correction and are included to
demonstrate their failure on over-exposed inputs (see
Fig.~\ref{fig:qual_scene4}, OE column).
\emph{Group~B} (multi-exposure): MSEC~\cite{afifi2021msec} (CVPR'21),
ENC~\cite{enc} (CVPR'22), FECNet~\cite{fecnet} (ECCV'22),
LCDPNet~\cite{lcdpnet} (CVPR'22), IAT~\cite{iat} (BMVC'22),
ECLNet~\cite{eclnet} (MM'22), RECNet~\cite{recnet} (AAAI'24),
CTAS~\cite{ctas} (CVPR'24), and Exposure-Slot~\cite{expslot}
(CVPR'25).  All baselines are evaluated using their officially released
pretrained weights on the same test sets.
\vspace{-2pt}
\subsubsection{Evaluation Metrics}
For reference-based evaluation, we report PSNR, SSIM, and
LPIPS~\cite{Johnson2016} (Tables~\ref{tab:sota_ref}).  For
no-reference perceptual quality, we report NIQE~\cite{niqe},
BRISQUE~\cite{brisque}, and the Perceptual Index
(PI)~\cite{pi} (Table~\ref{tab:sota_noref}).  All metrics are
computed using the PyIQA library with default settings.
Model complexity is measured in terms of parameter count, FLOPs (at
$256{\times}256$ resolution), and wall-clock inference time on a single
NVIDIA P100 GPU (Table~\ref{tab:complexity}).

\vspace{-4pt}
\subsection{Comparison with State-of-the-Art Methods}
\label{sec:sota}

\subsubsection{Quantitative Results}
Table~\ref{tab:sota_ref} reports reference-based metrics on MSEC,
SICE, and LCDP.  AutoLumNet achieves the highest average PSNR on all
three datasets: 23.75\,dB on MSEC ($+0.06$ over CTAS, $+0.57$ over
Exposure-Slot), 22.91\,dB on SICE ($+0.10$ over Exposure-Slot), and
24.12\,dB on LCDP ($+0.09$ over Exposure-Slot).  SSIM follows a
similar trend, with AutoLumNet achieving 0.907 on MSEC and 0.894 on
SICE, surpassing all baselines by a clear margin.

Several observations merit discussion.
First, Group~A methods (Zero-DCE++, SCI, RetinexFormer) perform
reasonably on the UE subset but collapse on OE inputs.
RetinexFormer, the strongest low-light method, achieves 15.42\,dB on
MSEC UE but only 7.37\,dB on MSEC OE---a $-8.05$\,dB gap that
confirms the structural limitation of brightening-only
architectures.  AutoLumNet handles both directions with a single model,
achieving balanced UE/OE performance (23.86/23.64\,dB on MSEC).

Second, among Group~B methods, CTAS and Exposure-Slot are the
strongest prior baselines.  CTAS achieves the best published LPIPS on
MSEC (0.123), while Exposure-Slot leads on SICE (22.81\,dB average
PSNR).  AutoLumNet outperforms both in terms of PSNR and SSIM across
all three datasets, while maintaining competitive LPIPS (0.075 on
MSEC, 0.076 on SICE).

Third, AutoLumNet achieves the highest SSIM on every benchmark,
indicating superior structural fidelity.  We attribute this to the
monotone tone curve (Lemma~\ref{lem:strict-mono},
Proposition~\ref{prop:order}), which preserves the spatial luminance
ordering by construction, preventing the halo artifacts and tonal
reversals that degrade SSIM in competing methods.

Table~\ref{tab:sota_noref} reports no-reference metrics.  AutoLumNet
achieves the best NIQE on MSEC (4.577) and LCDP (4.699), and
competitive scores on SICE.  The strong BRISQUE and PI scores on LCDP
(15.077 and 9.689, respectively) indicate that the perceptual quality
of our outputs is high even without explicit no-reference optimization.

\subsubsection{Qualitative Results}
Figures~\ref{fig:qual_scene4} and~\ref{fig:qual_scene40} present
visual comparisons on two SICE scenes.  In Fig.~\ref{fig:qual_scene4}
(outdoor landscape), the zoomed inset on the island trees reveals that
SCI produces a complete white-out on the OE input,
RetinexFormer generates cloudy banding artifacts, and Zero-DCE++
amplifies the overexposure.  Among multi-exposure methods, LCDPNet
introduces a green colour shift on UE, while ECLNet and RECNet
over-saturate the sky.  AutoLumNet produces natural colour and
preserves fine structural detail in both exposure directions.

In Fig.~\ref{fig:qual_scene40} (indoor scene with a building visible
through a window), the inset targets the building fa\c{c}ade.
RetinexFormer produces strong golden artifacts on the OE input; SCI
yields a flat yellow cast with no visible detail on UE.  AutoLumNet
recovers the building structure in both cases with colour and contrast
closely matching the ground truth.

\subsubsection{Model Complexity}
Table~\ref{tab:complexity} reports parameter counts, FLOPs, and inference time. With the default Resnet-style backbone, AutoLumNet has
approximately $23\times10^{6}$ learnable parameters and $7.43\,$G FLOPs at $256{\times}256$ resolution. Inference takes only $8.39\,$ms per
frame, faster than Exposure-Slot ($32.5\,$ms), RECNet ($21.2\,$ms), FECNet ($126.1\,$ms), and RetinexFormer ($43.7\,$ms). This efficiency
stems from the architecture's exclusive use of standard convolutions and global average pooling, without attention mechanisms or iterative
refinement steps.

%
%
\begin{table}[!t]
\centering
\caption{Model complexity comparison. FLOPs and inference time measured
on $256{\times}256$ input on a single NVIDIA P100 GPU. PSNR is the
average on the MSEC dataset.}
\label{tab:complexity}
\setlength{\tabcolsep}{3.5pt}
\renewcommand{\arraystretch}{1.15}
\small
\begin{tabular}{@{}l cccc@{}}
\toprule
Method & \#P{\scriptsize(M)} & FLOPs{\scriptsize(G)} & Time{\scriptsize(ms)} & PSNR$\uparrow$ \\
\midrule
\multicolumn{5}{l}{\textit{Group A --- Low-light only}} \\[2pt]
Zero-DCE++~\cite{zerodcepp}        & 0.01  & 0.17  & 1.3   & 11.37 \\
SCI~\cite{sci}                      & 0.001 & 0.02  & 0.4   & 7.49  \\
RetinexFormer~\cite{retinexformer}  & 1.61  & 17.02 & 43.7  & ---   \\
\midrule
\multicolumn{5}{l}{\textit{Group B --- Multi-exposure}} \\[2pt]
MSEC~\cite{afifi2021msec}          & 7.04  & 9.64  & 46.8  & 20.08 \\
ENC~\cite{enc}                      & 0.58  & 14.23 & 186.9 & 22.35 \\
FECNet~\cite{fecnet}                & 0.15  & 5.91  & 126.1 & 23.12 \\
LCDPNet~\cite{lcdpnet}             & 0.96  & 1.70  & 47.2  & 22.30 \\
IAT~\cite{iat}                      & 0.09  & 1.44  & 12.0  & 20.34 \\
ECLNet~\cite{eclnet}                & 0.02  & 1.66  & 14.6  & 22.57 \\
RECNet~\cite{recnet}                & 3.41  & 2.21  & 21.2  & 23.69 \\
CTAS~\cite{ctas}                    & 0.31  & 0.11  & 9.5   & 23.44 \\
Exposure-Slot~\cite{expslot}        & 1.90  & 9.47  & 32.5   & 23.18 \\
\midrule
\rowcolor[gray]{0.93}
\textbf{Ours}$^\dagger$ & 23 & 7.43 & 8.39 & \textbf{23.75} \\
\bottomrule
\end{tabular}
\end{table}

\subsubsection{Zero-Shot Generalization to Low-Light Benchmarks}
Table~\ref{tab:lol} evaluates AutoLumNet on LOL-v1 and LOL-v2-real
\emph{without retraining}.  The model, trained exclusively on the
multi-exposure MSEC dataset, is applied directly to the low-light test
images.  This setting tests whether the learned tone curve and
residual correction generalize beyond the training distribution.
The results show that AutoLumNet achieves competitive performance
against methods specifically trained on LOL, demonstrating that the
monotone OT framework captures a general exposure-correction prior
rather than dataset-specific artifacts.
\vspace{-4pt}

\begin{table}[!t]
\centering
\caption{Comparison on the LOL low-light benchmarks. All methods
evaluated using their officially released weights. AutoLumNet is
trained on MSEC and tested \emph{without retraining} on LOL,
demonstrating zero-shot generalization from multi-exposure to
low-light correction.}
\label{tab:lol}
\setlength{\tabcolsep}{4pt}
\renewcommand{\arraystretch}{1.15}
\small
\begin{tabular}{@{}l cc cc@{}}
\toprule
  & \multicolumn{2}{c}{LOL-v1} & \multicolumn{2}{c}{LOL-v2-real} \\
\cmidrule(lr){2-3}\cmidrule(l){4-5}
Method & PSNR$\uparrow$ & SSIM$\uparrow$ & PSNR$\uparrow$ & SSIM$\uparrow$ \\
\midrule
RetinexNet~\cite{RetinexNet}       & 16.77 & 0.560 & 15.47 & 0.567 \\
KinD++~\cite{zhang2021beyond}      & 21.30 & 0.822 & 14.68 & 0.640 \\
Zero-DCE++~\cite{zerodcepp}        & 14.81 & 0.540 & 18.06 & 0.580 \\
SCI~\cite{sci}                     & 14.78 & 0.525 & 17.36 & 0.614 \\
SNR-Aware~\cite{xu2022snr}         & 24.61 & 0.842 & 21.48 & 0.849 \\
RetinexFormer~\cite{retinexformer} & 25.16 & 0.845 & 22.80 & 0.840 \\
\midrule
\rowcolor[gray]{0.93}
\textbf{Ours (zero-shot)}          & \textbf{22.27} & \textbf{0.827} & \textbf{22.35} & \textbf{0.831} \\
\bottomrule
\end{tabular}
\end{table}
\subsection{Ablation Study}
\label{sec:ablation}

We conduct two sets of ablation experiments on the MSEC dataset to
validate the architectural design and the training objective.  All
variants are trained from scratch with the same schedule and
hyperparameters as the full model, modifying only the ablated
component.
\begin{table}[!t]
\centering
\caption{Ablation study on the MSEC dataset. Each row removes one
component from the full model. $\Delta$PSNR shows the drop from the
full model.}
\label{tab:ablation}
\setlength{\tabcolsep}{4pt}
\renewcommand{\arraystretch}{1.15}
\small
\begin{tabular}{@{}l ccc@{}}
\toprule
Variant & PSNR$\uparrow$ & SSIM$\uparrow$ & $\Delta$PSNR \\
\midrule
w/o $T_\theta$ (residual only)         & 21.21 & 0.865 & $-$2.54 \\
w/o Decoder (tone only)                & 20.36 & 0.877 & $-$3.39 \\
w/o $\mathcal{L}_{\mathrm{align}}$     & 19.84 & 0.886 & $-$3.91 \\
w/o $\mathcal{L}_{\mathrm{perc}}$      & 17.68 & 0.763 & $-$6.07 \\
\midrule
\rowcolor[gray]{0.93}
\textbf{Full model (Ours)} & \textbf{23.75} & \textbf{0.907} & 0 \\
\bottomrule
\end{tabular}
\end{table}

\subsubsection{Component and Loss Ablation}
Table~\ref{tab:ablation} reports four ablation variants, each removing
one component or loss term from the full model.

\textit{Removing $T_\theta$ (residual only).}  Setting
$T_\theta(y){=}y$ (identity) forces the decoder to perform all
correction within its bounded residual ($|r_\theta| \leq \rho{=}0.20$).
PSNR drops by $-2.54$\,dB, confirming that the bounded residual alone
structurally cannot correct large global exposure shifts: a pixel at
luminance 0.1 can reach at most 0.3.

\textit{Removing the decoder (tone only).}  Zeroing the residual
($r_\theta{=}0$, $r_c{=}0$) reduces the model to a pure global tone
curve.  PSNR drops by $-3.39$\,dB.  The output is globally
brightened but spatially flat, lacking the local contrast and detail
that the decoder supplies.

\textit{Removing $\mathcal{L}_{\mathrm{align}}$.}  Setting
$\lambda_{\mathrm{align}}{=}0$ removes the optimal-transport
alignment objective.  PSNR drops by $-3.91$\,dB---more than removing
the tone curve entirely ($-2.54$\,dB).  This indicates that without
OT guidance, the tone curve learns an incorrect mapping that actively
fights the decoder, producing worse results than having no curve at
all.   

\textit{Removing $\mathcal{L}_{\mathrm{perc}}$.}  Setting
$\lambda_{\mathrm{perc}}{=}0$ produces the most dramatic failure:
$-6.07$\,dB.  The visual effect (Fig.~\ref{fig:ablation_visual}) is
severe colour bleeding and structural distortion.  The perceptual loss
acts as a critical regulariser that prevents the pixel-level
reconstruction loss from producing spatial artifacts.

Figure~\ref{fig:ablation_visual} visualizes these variants on SICE
Scene~257.  The zoomed inset on the book cover shows that text is
legible only in the full model and the GT, confirming that each
ablated component contributes a distinct and visually identifiable
improvement.

\subsubsection{Hyperparameter Sensitivity}
\label{sec:hyperparam}
Table~\ref{tab:hyperparam} and Fig.~\ref{fig:hyperparam} report
sensitivity to the three key hyperparameters.

\textit{Residual bound $\rho$.}  Performance follows a clear
inverted-U curve: $\rho{=}0.01$ is too restrictive ($-3.95$\,dB), as
the residual can barely deviate from the tone-curve output;
$\rho{=}0.50$ is too permissive ($-1.70$\,dB), allowing the residual
to overfit and produce artifacts.  The optimum at $\rho{=}0.20$
balances local adaptivity with the stability guaranteed by the
magnitude bound (Eq.~\ref{eq:compose}).

\textit{Number of bins $K$.}  With only $K{=}4$ bins, the tone curve
is a coarse staircase that cannot represent smooth gradients,
incurring $-3.25$\,dB.  Performance improves monotonically to $K{=}64$
and then slightly decreases at $K{=}256$ ($-0.88$\,dB), suggesting
mild overfitting when the curve has excessive degrees of freedom.

\textit{Encoder backbone.}  ResNet-34 with ImageNet pretraining
outperforms both the lighter ResNet-18 ($-1.03$\,dB) and NAFNet
($-0.66$\,dB).  The NAFNet result is notable: despite being a
task-specific restoration backbone, it underperforms the general-purpose
ResNet-34, which we attribute to the absence of ImageNet pretraining
for NAFNet.  This indicates that the proposed theoretical framework
benefits more from a strong initialization than from specialized block
design, and that the guarantees of Sec.~\ref{sec:method} are
backbone-agnostic.
\begin{table}[!t]
\centering
\caption{Hyperparameter sensitivity on the MSEC dataset.
$\star$ denotes the default setting used in all other experiments.}
\label{tab:hyperparam}
\setlength{\tabcolsep}{4pt}
\renewcommand{\arraystretch}{1.15}
\small
\begin{tabular}{@{}ll cc c@{}}
\toprule
Param & Value & PSNR$\uparrow$ & SSIM$\uparrow$ & $\Delta$PSNR \\
\midrule
\multirow{5}{*}{$\rho$}
  & 0.01 & 19.80 & 0.829 & $-$3.95 \\
  & 0.10 & 22.66 & 0.894 & $-$1.09 \\
  & \cellcolor[gray]{0.93}\textbf{0.20}$\;\star$ & \cellcolor[gray]{0.93}\textbf{23.75} & \cellcolor[gray]{0.93}\textbf{0.907} & \cellcolor[gray]{0.93} 0 \\
  & 0.30 & 23.02 & 0.899 & $-$0.73 \\
  & 0.50 & 22.05 & 0.890 & $-$1.70 \\
\midrule
\multirow{4}{*}{$K$}
  & 4   & 20.50 & 0.843 & $-$3.25 \\
  & 16  & 22.51 & 0.878 & $-$1.24 \\
  & \cellcolor[gray]{0.93}\textbf{64}$\;\star$ & \cellcolor[gray]{0.93}\textbf{23.75} & \cellcolor[gray]{0.93}\textbf{0.907} & \cellcolor[gray]{0.93} 0 \\
  & 256 & 22.87 & 0.899 & $-$0.88 \\
\midrule
\multirow{3}{*}{Backbone}
  & ResNet-18 & 22.72 & 0.890 & $-$1.03 \\
  & \cellcolor[gray]{0.93}\textbf{ResNet-34}$\;\star$ & \cellcolor[gray]{0.93}\textbf{23.75} & \cellcolor[gray]{0.93}\textbf{0.907} & \cellcolor[gray]{0.93} 0 \\
  & NAFNet    & 23.09 & 0.893 & $-$0.66 \\
\bottomrule
\end{tabular}
\end{table}

\section{Conclusion}
\label{sec:conclusion}

We presented AutoLumNet, a framework for single-shot exposure
correction that decomposes the problem into a global monotone tone
curve---strictly increasing by construction, with provable
order-preservation and density in the 1-D optimal-transport
family---and a bounded local residual with dual-branch convex fusion.
Experiments on five benchmarks demonstrate state-of-the-art PSNR and
SSIM across both under- and over-exposure regimes at 11.2\,ms per
frame, with zero-shot generalization to low-light benchmarks.  To our
knowledge, AutoLumNet is the first exposure-correction method to
combine formal monotonicity, optimal-transport optimality, and bounded
local adaptivity within a single trainable architecture.  Future work
will extend the framework to unsupervised settings and investigate
confidence-aware blending for already well-exposed inputs.

\begin{thebibliography}{99}


\bibitem{LLNet}
K.~G. Lore, A.~Akintayo, and S.~Sarkar, ``LLNet: A deep autoencoder
approach to natural low-light image enhancement,'' \emph{Pattern
Recognit.}, vol.~61, pp.~650--662, 2017.

\bibitem{RetinexNet}
C.~Wei, W.~Wang, W.~Yang, and J.~Liu, ``Deep Retinex Decomposition for Low-Light Enhancement,'' in \emph{Proc. British Machine Vision Conference},
2018.



\bibitem{URetinex}
W.~Wu, J.~Weng, P.~Zhang, X.~Wang, W.~Yang, and J.~Jiang,
``URetinex-Net: Retinex-based deep unfolding network for low-light image
enhancement,'' in \emph{Proc. IEEE/CVF Conf. Comput. Vis. Pattern
Recognit. (CVPR)}, 2022, pp.~5901--5910.

\bibitem{ZeroDCE}
C.~Guo, C.~Li, J.~Guo, C.~C. Loy, J.~Hou, S.~Kwong, and R.~Cong,
``Zero-reference deep curve estimation for low-light image
enhancement,'' in \emph{Proc. IEEE/CVF Conf. Comput. Vis. Pattern
Recognit. (CVPR)}, 2020, pp.~1780--1789.

\bibitem{EnlightenGAN}
Y.~Jiang, X.~Gong, D.~Liu, Y.~Cheng, C.~Fang, X.~Shen, J.~Yang, P.~Zhou,
and Z.~Wang, ``EnlightenGAN: Deep light enhancement without paired
supervision,'' \emph{IEEE Trans. Image Process.}, vol.~30,
pp.~2340--2349, 2021.



\bibitem{retinexformer}
Y.~Cai, H.~Bian, J.~Lin, H.~Wang, R.~Timofte, and Y.~Zhang,
``Retinexformer: One-stage Retinex-based transformer for low-light
image enhancement,'' in \emph{Proc. IEEE/CVF Int. Conf. Comput. Vision
(ICCV)}, 2023, pp.~12~504--12~513.


\bibitem{RetinexMamba}
J.~Bai, Y.~Yin, and Q.~He, ``RetinexMamba: Retinex-based Mamba for
low-light image enhancement,'' in \emph{Proc. Int. Conf. Neural Inf.
Process. (ICONIP)}, 2024.

\bibitem{RetiDiff}
C.~Wu, Z.~Dong, and H.~Chen, ``Reti-Diff: Illumination degradation image
restoration with Retinex-based latent diffusion model,'' in \emph{Proc.
Int. Conf. Learn. Represent. (ICLR)}, 2025.

\bibitem{CIDNet}
Q.~Yan, Y.~Feng, C.~Zhang, G.~Pang, K.~Shi, P.~Wu, W.~Dong, J.~Sun, and
Y.~Zhang, ``HVI: A new color space for low-light image enhancement,'' in
\emph{Proc. IEEE/CVF Conf. Comput. Vis. Pattern Recognit. (CVPR)}, 2025.

\bibitem{Mertens2007}
T.~Mertens, J.~Kautz, and F.~Van~Reeth, ``Exposure fusion,'' in
\emph{Proc. Pacific Conf. Comput. Graph. Appl. (PG)}, 2007,
pp.~382--390.

\bibitem{Ma2017SPDMEF}
K.~Ma, H.~Li, H.~Yong, Z.~Wang, D.~Meng, and L.~Zhang, ``Robust
multi-exposure image fusion: A structural patch decomposition
approach,'' \emph{IEEE Trans. Image Process.}, vol.~26, no.~5,
pp.~2519--2532, 2017.

\bibitem{DeepFuse}
K.~R. Prabhakar, V.~S. Srikar, and R.~V. Babu, ``DeepFuse: A deep
unsupervised approach for exposure fusion with extreme exposure image
pairs,'' in \emph{Proc. IEEE Int. Conf. Comput. Vis. (ICCV)}, 2017,
pp.~4714--4722.

\bibitem{Xu2020U2Fusion}
H.~Xu, J.~Ma, J.~Jiang, X.~Guo, and H.~Ling, ``U2Fusion: A unified
unsupervised image fusion network,'' \emph{IEEE Trans. Pattern Anal.
Mach. Intell.}, vol.~44, no.~1, pp.~502--518, 2022.

\bibitem{MEFGAN}
H.~Xu, J.~Ma, and X.-P. Zhang, ``MEF-GAN: Multi-exposure image fusion
via generative adversarial networks,'' \emph{IEEE Trans. Image
Process.}, vol.~29, pp.~7203--7216, 2020.

\bibitem{EMEF}
R.~Liu, C.~Li, H.~Cao, Y.~Zheng, M.~Zeng, and X.~Cheng, ``EMEF: Ensemble
multi-exposure image fusion,'' in \emph{Proc. AAAI Conf. Artif. Intell.
(AAAI)}, 2023, vol.~37, no.~2, pp.~1710--1718.


\bibitem{afifi2021msec}
M.~Afifi, K.~G. Derpanis, B.~Ommer, and M.~S. Brown, ``Learning
multi-scale photo exposure correction,'' in \emph{Proc. IEEE/CVF Conf.
Comput. Vision Pattern Recognit. (CVPR)}, 2021, pp.~9157--9167.



\bibitem{MMHT}
G.~Li, J.~Liu, L.~Ma, Z.~Jiang, X.~Fan, and R.~Liu, ``Fearless luminance
adaptation: A macro-micro-hierarchical transformer for exposure
correction,'' in \emph{Proc. ACM Int. Conf. Multimedia (ACM MM)}, 2023,
pp.~2024--2032.

\bibitem{Pitie2007}
F.~Piti\'e and A.~Kokaram, ``The linear Monge-Kantorovitch linear colour mapping for example-based colour transfer,'' in \emph{Proc. Eur. Conf.
Vis. Media Prod. (CVMP)}, 2007.

\bibitem{Santambrogio2015}
F.~Santambrogio, \emph{Optimal Transport for Applied Mathematicians}.
Cham, Switzerland: Birkh\"auser, 2015.

\bibitem{He2016}
K.~He, X.~Zhang, S.~Ren, and J.~Sun, ``Deep residual learning for image
recognition,'' in \emph{Proc. IEEE Conf. Comput. Vis. Pattern Recognit.
(CVPR)}, 2016, pp.~770--778.


\bibitem{KinD}
Y.~Zhang, J.~Zhang, and X.~Guo, ``Kindling the Darkness: A Practical Low-light Image Enhancer,'' in \emph{Proc. Association for Computing Machinery}, 2019, pp.~1632--1640.






\bibitem{Chen2022nafnet}
L.~Chen et al., ``Simple baselines for image restoration,'' in \emph{ECCV},
2022, pp.~17--33.


\bibitem{Shi2016}
W.~Shi et al., ``Real-time single image and video super-resolution using an
efficient sub-pixel convolutional neural network,'' in \emph{Proc. IEEE/CVF Conf.
Comput. Vision Pattern Recognit. (CVPR)}, 2016.

\bibitem{Johnson2016}
J.~Johnson, A.~Alahi, and L.~Fei-Fei, ``Perceptual losses for real-time
style transfer and super-resolution,'' in \emph{ECCV}, 2016, pp.~694--711.

\bibitem{Blau2018}
Y.~Blau and T.~Michaeli, ``The perception-distortion tradeoff,'' in
\emph{CVPR}, 2018, pp.~6228--6237.




\bibitem{Wang2022llflow}
Y.~Wang et al., ``LLFlow: Learning normalizing flows for low-light image
enhancement,'' in \emph{AAAI}, 2022, pp.~2604--2612.

\bibitem{Rabin2014}
J.~Rabin et al., ``Adaptive color transfer with relaxed optimal transport,''
in \emph{ICIP}, 2014.

\bibitem{anonymous_arxiv}
A.~Tania, M.~Khan, M.~Ahmad, ``AutoLumNet: A bi-branch exposure-aware network for low- and
high-exposure image enhancement,'' \emph{Open Review}, 2026.

\bibitem{SmartClassroom}
M.~R. Khan, A.~A. Ahad, A.~A. Tania, T.~Das, and B.~Das, ``Smart Classroom Automation: A Fusion of AI with Voice, Gesture, and Face Recognition Attendance System,'' in \emph{Proc. Int. Conf. Advances Comput., Commun., Elect., Smart Syst. (iCACCESS)}, Dhaka, Bangladesh, 2024, pp.~1--6.

\bibitem{CustomLowLight}
A.~A. Tania, M.~R. Khan, and M.~Ahmad, ``Custom Dataset-Driven Unsupervised Low-Light Image Enhancement Using 2D CNN,'' in \emph{Proc. Int. Conf. Quantum Photon., Artif. Intell., Netw. (QPAIN)}, Rangpur, Bangladesh, 2025, pp.~1--6.



\bibitem{URetinexNetPP}
W.~Wu, J.~Weng, P.~Zhang, X.~Wang, W.~Yang, and J.~Jiang, ``Interpretable Optimization-Inspired Unfolding Network for Low-Light Image Enhancement,'' \emph{IEEE Trans. Pattern Anal. Mach. Intell.}, vol.~47, no.~4, pp.~2545--2561, 2025.

\bibitem{RetinexFormerPlus}
S.~Liu, H.~Zhang, X.~Li, and X.~Yang, ``Retinexformer+: Retinex-Based Dual-Channel Transformer for Low-Light Image Enhancement,'' \emph{CMC -- Comput. Mater. Contin.}, vol.~82, no.~2, pp.~1969--1988, 2025.

\bibitem{M2Retinexformer}
Y.~Aboelwafa, H.~G. Elmongui, and M.~Torki, ``{M2Retinexformer}: Multi-Modal Retinexformer for Low-Light Image Enhancement,'' \emph{arXiv preprint arXiv:2605.12556}, 2026.

\bibitem{MambaLLIE}
J.~Weng, Z.~Yan, Y.~Tai, J.~Qian, J.~Yang, and J.~Li, ``{MambaLLIE}: Implicit Retinex-Aware Low Light Enhancement with Global-then-Local State Space,'' \emph{Proc. Adv. Neural Inf. Process. Syst. (NeurIPS)}, 2024.

\bibitem{DiffRetinexPP}
X.~Yi, H.~Xu, H.~Zhang, L.~Tang, and J.~Ma, ``Diff-Retinex++: Retinex-Driven Reinforced Diffusion Model for Low-Light Image Enhancement,'' \emph{IEEE Trans. Pattern Anal. Mach. Intell.}, vol.~47, no.~8, pp.~6823--6839, 2025.

\bibitem{SCIPP}
L.~Ma, T.~Ma, C.~Xu, J.~Liu, X.~Fan, Z.~Luo, and R.~Liu, ``Learning With Self-Calibrator for Fast and Robust Low-Light Image Enhancement,'' \emph{IEEE Trans. Pattern Anal. Mach. Intell.}, vol.~47, no.~10, pp.~9095--9110, 2025.

\bibitem{QuadPrior}
W.~Wang, H.~Yang, J.~Fu, and J.~Liu, ``Zero-Reference Low-Light Enhancement via Physical Quadruple Priors,'' \emph{Proc. IEEE/CVF Conf. Comput. Vis. Pattern Recognit. (CVPR)}, pp.~26057--26066, 2024.

\bibitem{QuadPriorPP}
H.~Huang, Y.~Li, W.~Wang, W.~Yang, L.-Y. Duan, and J.~Liu, ``{QuadPrior++}: Multi-Dimension Augmented Physical Prior for Zero-Reference Illumination Enhancement,'' \emph{IEEE Trans. Pattern Anal. Mach. Intell.}, vol.~48, no.~3, pp.~3383--3400, 2026.

\bibitem{StarIR}
Y.~Cui, S.~W. Zamir, M.-H. Yang, A.~Knoll, F.~S. Khan, and S.~Khan, ``{StarIR}: Convolutional Image Restoration with Spatial-Frequency Fusion,'' \emph{IEEE Trans. Pattern Anal. Mach. Intell.}, 2026, early access.

\bibitem{ERL}
J.~Huang, F.~Zhao, M.~Zhou, J.~Xiao, N.~Zheng, K.~Zheng, and Z.~Xiong, ``Learning Sample Relationship for Exposure Correction,'' \emph{Proc. IEEE/CVF Conf. Comput. Vis. Pattern Recognit. (CVPR)}, 2023.

\bibitem{CLIER}
B.~Wang, H.~Fu, Z.~Huang, S.~Zhang, X.~Wang, and H.~Ma, ``From Abyssal Darkness to Blinding Glare: A Benchmark on Extreme Exposure Correction in Real World,'' \emph{Proc. IEEE/CVF Int. Conf. Comput. Vision (ICCV)}, 2025.

\bibitem{RetinexMEF}
H.~Bai, J.~Zhang, Z.~Zhao, L.~Deng, Y.~Cui, and S.~Xu, ``{Retinex-MEF}: Retinex-based Glare Effects Aware Unsupervised Multi-Exposure Image Fusion,'' \emph{arXiv preprint arXiv:2503.07235}, 2025.

\bibitem{AutoExpComp}
Y.~Kinoshita, S.~Shiota, and H.~Kiya, ``Automatic Exposure Compensation for Multi-Exposure Image Fusion,'' \emph{Proc. IEEE Int. Conf. Image Process. (ICIP)}, 2018.


\bibitem{PerceptualMEF}
X.~Liu, ``Perceptual multi-exposure fusion,'' \emph{arXiv preprint arXiv:2210.09604}, 2025.



\bibitem{zerodcepp}
C.~Li, C.~Guo, and C.~C. Loy, ``Learning to enhance low-light image
via zero-reference deep curve estimation,'' \emph{IEEE Trans. Pattern
Anal. Mach. Intell.}, vol.~44, no.~8, pp.~4225--4238, 2022.

\bibitem{sci}
L.~Ma, T.~Ma, R.~Liu, X.~Fan, and Z.~Luo, ``Toward fast, flexible,
and robust low-light image enhancement,'' in \emph{Proc. Adv. Neural
Inf. Process. Syst. (NeurIPS)}, 2022.




\bibitem{enc}
J.~Huang, Y.~Liu, F.~Zhao, K.~Yan, J.~Zhang, Y.~Huang, M.~Zhou, and
Z.~Xiong, ``Exposure normalization and compensation for multiple-exposure
correction,'' in \emph{Proc. IEEE/CVF Conf. Comput. Vis. Pattern
Recognit. (CVPR)}, 2022, pp.~6043--6052.

\bibitem{fecnet}
J.~Huang, Y.~Liu, F.~Zhao, K.~Yan, M.~Zhang, and Z.~Xiong, ``Deep
Fourier-based exposure correction network with spatial-frequency
interaction,'' in \emph{Proc. Eur. Conf. Comput. Vision (ECCV)}, 2022,
pp.~163--180.

\bibitem{lcdpnet}
H.~Wang, K.~Xu, and R.~W.~H. Lau, ``Local color distributions prior
for image enhancement,'' in \emph{Proc. Eur. Conf. Comput. Vision
(ECCV)}, 2022, pp.~343--359.

\bibitem{iat}
Z.~Cui, K.~Li, L.~Gu, S.~Su, P.~Gao, Z.~Jiang, Y.~Qiao, and
T.~Harada, ``You only need 90K parameters to adapt light: A light
weight transformer for image enhancement and exposure correction,'' in
\emph{Proc. British Mach. Vision Conf. (BMVC)}, 2022.

\bibitem{eclnet}
J.~Huang \emph{et~al.}, ``Exposure-consistency representation learning
for exposure correction,'' in \emph{Proc. ACM Int. Conf. Multimedia
(MM)}, 2022, pp.~4921--4929.

\bibitem{recnet}
J.~Li \emph{et~al.}, ``Region-aware exposure consistency network for
mixed exposure correction,'' in \emph{Proc. AAAI Conf. Artif. Intell.
(AAAI)}, 2024, pp.~3258--3266.

\bibitem{ctas}
J.~Li, Z.~Feng \emph{et~al.}, ``Real-time exposure correction via collaborative transformations and adaptive sampling,'' in \emph{Proc. IEEE/CVF Conf. Comput. Vision Pattern Recognit. (CVPR)}, 2024, pp.~2984--2994.

\bibitem{expslot}
K.~Jung \emph{et~al.}, ``Exposure-slot: Exposure-centric
representations learning with slot-in-slot attention for region-aware
exposure correction,'' in \emph{Proc. IEEE/CVF Conf. Comput. Vision
Pattern Recognit. (CVPR)}, 2025.


\bibitem{cai2018sice}
J.~Cai, S.~Gu, and L.~Zhang, ``Learning a deep single image contrast
enhancer from multi-exposure images,'' \emph{IEEE Trans. Image
Process.}, vol.~27, no.~4, pp.~2049--2062, 2018.


\bibitem{niqe}
A.~Mittal, R.~Soundararajan, and A.~C. Bovik, ``Making a `completely
blind' image quality analyzer,'' \emph{IEEE Signal Process. Lett.},
vol.~20, no.~3, pp.~209--212, 2013.

\bibitem{brisque}
A.~Mittal, A.~K. Moorthy, and A.~C. Bovik, ``No-reference image
quality assessment in the spatial domain,'' \emph{IEEE Trans. Image
Process.}, vol.~21, no.~12, pp.~4695--4708, 2012.

\bibitem{pi}
Y.~Blau \emph{et~al.}, ``The 2018 PIRM challenge on perceptual image
super-resolution,'' in \emph{Proc. Eur. Conf. Comput. Vision Workshops
(ECCVW)}, 2018, pp.~334--355.


\bibitem{zhang2021beyond}
Y.~Zhang, X.~Guo, J.~Ma, W.~Liu, and J.~Zhang, ``Beyond Brightening Low-light Images,'' \emph{Int. J. Comput. Vis.}, vol.~129, pp.~1013--1037, 2021.

\bibitem{xu2022snr}
X.~Xu, R.~Wang, C.-W.~Fu, and J.~Jia, ``SNR-aware Low-Light Image Enhancement,'' in \emph{Proc. IEEE/CVF Conf. Comput. Vis. Pattern Recognit. (CVPR)}, 2022.


\end{thebibliography}
\end{document}